\def\eqref#1{equation~\ref{#1}}
\def\1{\bm{1}}
\DeclareMathAlphabet{\mathsfit}{\encodingdefault}{\sfdefault}{m}{sl}
\SetMathAlphabet{\mathsfit}{bold}{\encodingdefault}{\sfdefault}{bx}{n}
\newtheorem{theorem}{Theorem}[section]
\newtheorem{definition}{Definition}[section]
\newtheorem{lemma}[theorem]{Lemma}
\declaretheoremstyle[%
  spaceabove=-6pt,%
  spacebelow=6pt,%
  headfont=\normalfont\itshape,%
  postheadspace=1em,%
  qed=\qedsymbol%
]{mystyle} 
\declaretheorem[name={Proof},style=mystyle,unnumbered,
]{prf}
\title{Multi-level Certified Defense Against \\ Poisoning Attacks in Offline Reinforcement Learning}
\author{Shijie Liu\textsuperscript{\rm 1$\star$}, Andrew C. Cullen\textsuperscript{\rm 1}, Paul Montague\textsuperscript{\rm 2}, Sarah Erfani\textsuperscript{\rm 1}, Benjamin I. P. Rubinstein\textsuperscript{\rm 1}\\
\textsuperscript{\rm 1}School of Computing and Information Systems, University of Melbourne, Melbourne, Australia\\
\textsuperscript{\rm 2}Defence Science and Technology Group, Adelaide, Australia\\
\textsuperscript{$\star$}\texttt{shijie3@unimelb.edu.au}\\
}
\begin{document}

\maketitle

\begin{abstract}
Similar to other machine learning frameworks, Offline Reinforcement Learning (RL) is shown to be vulnerable to poisoning attacks, due to its reliance on externally sourced datasets, a vulnerability that is exacerbated by its sequential nature. To mitigate the risks posed by RL poisoning, we extend certified defenses to provide larger guarantees against adversarial manipulation, ensuring robustness for both per-state actions, and the overall expected cumulative reward. Our approach leverages properties of Differential Privacy, in a manner that allows this work to span both continuous and discrete spaces, as well as stochastic and deterministic environments---significantly expanding the scope and applicability of achievable guarantees. Empirical evaluations demonstrate that our approach ensures the performance drops to no more than $50\%$ with up to $7\%$ of the training data poisoned, significantly improving over the $0.008\%$ in prior work~\citep{wu_copa_2022}, while producing certified radii that is $5$ times larger as well. This highlights the potential of our framework to enhance safety and reliability in offline RL.
\end{abstract}

\section{Introduction}
Offline Reinforcement Learning (RL), also known as batch RL, involves training policies entirely from pre-collected datasets. Doing so is particularly advantageous in scenarios where directly interacting with the environment is costly, risky, or infeasible, such as healthcare~\citep{wang2018supervised}, autonomous driving~\citep{pan2017agile}, and robotics~\citep{gurtler2023benchmarking}. Due to this, offline RL mechanistically shares the same vulnerability to \emph{data poisoning attacks}~\citep{kiourti_trojdrl_2020, wang_stop-and-go_2021} as traditional classifiers, in which adversarial manipulation of the training data can lead to suboptimal or harmful decisions. Such vulnerability is further intensified by the dependence on external datasets collected by unknown behavioral agents and the dynamic, sequential decision-making process of RL. Across industrial users of RL, poisoning attacks are broadly considered to pose the most pressing security risk~\citep{kumar2020adversarial}, with offline settings being of particular concern~\citep{zhang_corruption-robust_2021}. These intrinsic risks highlight the need for specialized defensive strategies to be developed, in order to support RL deployments.

Defenses in RL share many of the same risks as defenses deployed for other Machine Learning paradigms, in that they can be circumvented by a motivated attacker. By contrast, \emph{certified defenses} offer theoretical guarantees of robustness against worst-case adversarial manipulations. Such robustness guarantees are particularly desirable in safety-critical domains and have been extensively explored in classification tasks \citep{lecuyer_certified_2019, salman_provably_nodate, cullen2022double}. However the direct applicability of these techniques to RL is made more challenging due to the complex sequential dependency and interactive nature of RL~\citep{kiourti_trojdrl_2020}. 

While some works~\citep{ye_corruption-robust_2023, zhang_corruption-robust_2021, lykouris_corruption-robust_2023} have established robustness bounds for RL from a \emph{theoretical} perspective, they typically rely on significant simplifications of the problem setting that do not reflect the complexities of real-world RL scenarios, limiting their applicability. Furthermore, these robustness bounds are typically expressed in terms of the optimality gap to the practically unattainable Bellman optimal policy, offering qualitative insights rather than quantitative certification value of the robustness.

To circumvent these limitations, recent research has begun to consider how practical certifications can be constructed, resulting in the COPA approach~\citep{wu_copa_2022}. This approach provides computable lower bounds on \emph{cumulative reward} and \emph{certified radii}, to ensure policies are robust against poisoning attacks. However, despite its utility, COPA is fundamentally limited to discrete action spaces and deterministic settings, which constrains it to a small subset of potential RL environments. Additionally, it certifies only individual trajectories without offering robustness guarantees for the overall performance of the learned policy.

To resolve these limitations, within this work, we propose the \textbf{Mu}lti-level \textbf{C}ertified \textbf{D}efenses (MuCD) against poisoning attacks in general offline RL settings, offering multi-level robustness guarantees across different levels of poisoning. To assist in this, we distinguish between adversarial attacks against RL that involve \textit{trajectory-level} poisoning, which occur during the data collection process; and \textit{transition-level} poisoning, which occurs after the training dataset is collected. In response to the existence of these threat models, we propose employing certifications that employ \emph{action-level} robustness (expanding upon \citealt{wu_copa_2022}) to ensure that critical states are safeguarded against being entered; and \emph{policy-level} robustness, which provides a lower bound on the expected cumulative reward. This latter framework naturally aligns with RL policy’s primary goal~\citep{prudencio_survey_2024}. %
To achieve these certifications, our framework comprises two stages: a Differential Privacy (DP) based randomized training process and robustness certification methods. These certifications are broadly applicable to RL, covering both discrete and continuous action spaces, as well as deterministic and stochastic environments.

Our contributions on both a theoretical and empirical level are:
\begin{itemize}[leftmargin=*,noitemsep,topsep=0pt]
    \item Formulating both multi-level attacks and certifications for poisoning attacks in offline RL, enabling comprehensive analysis of the robustness of the offline RL training process. 
    \item Proposing the first practical certified defense framework that provides computable robustness certification in terms of both \emph{per-state action stability} and \emph{expected cumulative reward} in \emph{general offline RL settings}.
    \item Experimentally demonstrating significant improvements over past certification frameworks across varying environments and RL algorithms.
\end{itemize}

\section{Related Works}
\paragraph{Offline and Online RL.} RL approaches can be broadly categorized as online or offline learning. Of these, online learning algorithms involve agents learning by interacting with the environment in real-time, driving advances in a range of fields~\citep{silver2017mastering, schulman2017proximal, kendall2019learning}.
While frameworks like Policy Gradient~\citep{sutton1999policy} and Actor-Critic~\citep{mnih2016asynchronous} can effectively learn policies for online RL, their trial-and-error exploration may lead to unintended harmful outcomes and unsafe decisions during the learning process, which is of particular concern to safety-critical areas such as healthcare~\citep{yu2021reinforcement} and finance~\citep{nevmyvaka2006reinforcement}.
By contrast, offline (or batch) RL is considered safer, as it learns to emulate pre-collected data to create optimal policies without interacting with the environment~\citep{lange2012batch}. Algorithms such as Deep Q-Network (DQN)~\citep{mnih_playing_2013}, Implicit Q-Learning (IQL)~\citep{kostrikov2021offline} and C51~\citep{bellemare2017distributional} have demonstrated effective in leveraging historical data to optimize decision-making without further exploration.

\paragraph{Poisoning Attacks in Offline RL.} Adversarial attacks are a well-documented threat to machine learning, where motivated adversaries manipulate models to induce unexpected behaviors. Among these, poisoning attacks~\citep{barreno2006can, biggio_poisoning_2013}---which deliberately corrupt the training data to degrade the performance of learned models---are particularly concerning for offline RL, due to its reliance on pre-collected datasets and the complex dynamics of RL frameworks~\citep{kumar2020adversarial,kiourti_trojdrl_2020}. Adversaries can target specific components of the data, such as in reward poisoning attacks~\citep{wu2023reward}, or broadly corrupt the entire dataset as in general poisoning attacks~\citep{wang_stop-and-go_2021}. Corruption can occur after data has been collected cleanly~\citep{zhang_corruption-robust_2021} or during data collection~\citep{ye_corruption-robust_2023, gong2024baffle}.

\paragraph{Certified Defenses.} In response to these attacks, a range of defensive mechanisms have been proposed. Of these, certified defenses have drawn particular interest, due to their ability to provide \emph{robustness} guarantees against attack existence for classification tasks~\citep{peri2020deep, lecuyer_certified_2019, liu_enhancing_2021, cullentu_2024}, however applying these methods to RL directly has proven challenging due to RL's sequential dependency~\citep{kiourti_trojdrl_2020}. While some research has addressed certified robust RL in the context of reward poisoning~\citep{banihashem_defense_2021, nika2023online}, extensions to general poisoning attacks have been more limited, with results primarily restricted to \emph{theoretical analyses} of simplified variants (using linear MDPs or assuming bounded distances to Bellman optimality) of offline RL under attack occurring during data collection~\citep{ye_corruption-robust_2023} or afterward~\citep{yang_towards_2024, zhang_corruption-robust_2021}. Crucially, these robustness approaches typically produce bounds expressed in asymptotic measures of the optimality gap between the learned policy and the theoretical optimal policy, which have limited practical applicability. %

By contrast, COPA~\citep{wu_copa_2022} recently demonstrated that per-state certification for RL could be computed by adapting the Deep Partition Aggregation (DPA)~\citep{levine_deep_2021} method from classification tasks. Based on that, they proposed a tree-search approach that exhaustively explores all possible trajectories to compute a lower bound on the cumulative reward. However, such an approach intrinsically limits it to discrete action spaces and deterministic environments. Consequently, their certification framework only applies to specific, repeatable trajectories and fails to provide robustness guarantees for the reward or policy in more general scenarios.

\section{Preliminaries}
In this section, we formulate the offline RL framework as an episodic finite-horizon Markov Decision Process (MDP), establishing the foundation for our discussion. We then outline the dataset construction process and introduce a comprehensive multi-level poisoning attack model to address potential risks in offline RL training, along with the objectives for certified defense. Finally, we highlight key concepts from Differential Privacy (DP) that underpin our approach.

\subsection{Multi-level Poisoning}
\label{sec:Multi-level Poisoning}

\paragraph{Framework.}
The RL framework is modeled as an episodic finite-horizon MDP, represented by the tuple $(\mathcal{S}, \mathcal{A}, P, R, H, \gamma)$, where $\mathcal{S}$ is the state space, $\mathcal{A}$ is the action space, $P: \mathcal{S} \times \mathcal{A} \rightarrow \Delta(\mathcal{S})$ is the stochastic transition function with $\Delta(\cdot)$ defining the set of probability measures, $R: \mathcal{S} \times \mathcal{A} \rightarrow \mathbb{R}$ is the bounded reward function, $H$ is the time horizon, and $\gamma \in \mathbb{R}$ is the discount factor. 

At a time step $t$, an RL agent in state $s_t \in \mathcal{S}$ selects an action $a_t = \pi(s_t)$ according to its policy $\pi \in \Pi: \mathcal{S} \rightarrow \mathcal{A}$. Upon executing $a_t$, the agent transitions to the subsequent state $s_{t+1} \sim P(s_t, a_t)$ and receives a reward $r_t = R(s_t, a_t)$. The tuple $(s_t, a_t, s_{t+1}, r_t)$ is referred to as a transition, and the sequence of transitions $\{(s_t, a_t, s_{t+1}, r_t)\}_{t=0}^{H-1}$ over one episode constitutes a trajectory $\tau$.

\paragraph{Offline RL Datasets.} Offline RL employs a dataset $D=\{\tau_j\}_{j=1}^{M}$ consisting of $M$ trajectories, or equivalently $N$ transitions $D=\{(s_i, a_i, s_{i+1}, r_i)\}_{i=1}^{N}$, collected by an unknown behavioral policy $\pi_{\beta}$. The agent learns its policy $\pi$ from this dataset without further interaction with the environment, which provides opportunities for an adversary to poison the training data during the behavioral policy execution or after the collection process.

\paragraph{Trajectory-level Poisoning.} 
Trajectory-level poisoning occurs when the adversary corrupts the data collection process. 
The adversary, with full knowledge of the MDP, can observe all the historical transitions and alter transitions $\{(s_t, a_t, s_{t+1}, r_t)\}$ at arbitrarily many time steps by replacing them with $\{(\tilde{s}_t, \tilde{a}_t, \tilde{s}_{t+1}, \tilde{r}_t)\}$. As any alteration can affect subsequent transitions and propagate through gameplay, the following definition quantifies corruption by the number of modified \emph{trajectories}, following the adversarial models in robust statistics~\citep{diakonikolas_robust_2019} and robust RL~\citep{zhang_robust_2021, wu_copa_2022} settings. %

\begin{definition}[Trajectory-level poisoning]\label{defn:trajectory-level} 
Assume an adversary can make up to $r$ changes, including additions, deletions, or alterations, to the trajectories within a clean dataset $D=\{\tau_j\}_{j=1}^{M}$. Then the set of all possible poisoned datasets is $\mathcal{B}_{trj}(D, r):= \{\tilde{D}:|D \ominus_{trj} \tilde{D}|\leq r\}$, where $|D \ominus_{tra} \tilde{D}|$ measures the minimum number of changes to the trajectories required to map $D$ to $\tilde{D}$. %
\end{definition}
\paragraph{Transition-level Poisoning.} 
In transition-level poisoning, the adversary modifies transitions after the data is collected. Therefore, alterations are limited to the specific transitions $\{(\tilde{s}_i, \tilde{a}_i, \tilde{s}_{i+1}, \tilde{r}_i)\}_{i=1}^{r}$ that were directly modified, without impacting any subsequent transitions across the trajectory. Hence, we quantify the corruption by the total number of modified \emph{transitions} through the following definition.
\begin{definition}[Transition-level poisoning]\label{defn:transition-level} 
Assume an adversary can make up to $r$ changes, including additions, deletions, or alterations, to the transitions within a clean dataset $D=\{(s_i, a_i, s_{i+1}, r_i)\}_{i=1}^{N}$. Then the set of all possible poisoned datasets is $\mathcal{B}_{tra}(D, r):= \{\tilde{D}:|D \ominus_{tra} \tilde{D}|\leq r\}$, where $|D \ominus_{tra} \tilde{D}|$ measures the minimum number of changes to the transitions required to map $D$ to $\tilde{D}$.
\end{definition}

\subsection{Multi-level Robustness Certification}
\label{sec:Multi-level Robustness Certification}
To ensure robustness against data poisoning in RL, we aim to certify \emph{test-time} performance of a policy $\pi = \mathcal{M}(D)$ trained on a clean dataset $D$ with the training algorithm $\mathcal{M}$. In doing so, we will bound the difference between $\pi$ and the equivalent $\tilde{\pi}= \mathcal{M}(\tilde{D})$ trained upon a poisoned dataset $\tilde{D}$, subject to a constraint on the difference between $D$ and $\tilde{D}$.

\paragraph{Policy-level Robustness.} Offline RL algorithms aim to find an optimal policy that maximizes the expected cumulative reward for all trajectories induced by the policy~\citep{prudencio_survey_2024}. Thus we first aim to construct certifications regarding the \emph{expected cumulative reward}. The expected cumulative reward is denoted as $J(\pi) = \mathop{\mathbb{E}}_{\sigma,\xi} \left[\sum_t \gamma^t r_t \mid \pi \right]$, where $\xi$ represents the randomness of the environment and $\sigma$ represents the randomness introduced by the training algorithm. The following definition demonstrates how policy-level robustness certifications can construct a lower bound on the expected cumulative reward, henceforth labelled as $\underline{J}_{r}$, under a poisoning attack of size $r$. %
\begin{definition}[Policy-level robustness certification]
\label{def:policy-level robustness}
Given a clean dataset $D$, a policy-level certification ensures that a policy $\tilde{\pi} = \mathcal{M}(\tilde{D})$ trained on any poisoned dataset $\tilde{D} \in \mathcal{B}(D, r)$ will produce an expected cumulative reward $J(\tilde{\pi}) \geq \underline{J}_{r}$ with probability at least $1-\delta$.
\end{definition}

\paragraph{Action-level Robustness.} Beyond ensuring generalised robustness, it is crucial to be able to guarantee the safety of the agent by ensuring it avoids catastrophic outcomes and entering undesirable states~\citep{gu_review_2024}. Therefore, we also aim to certify the stability of the agent's actions on a per-state basis during testing. The action-level robustness certification in a discrete action space at state $s_t$ under a poisoning attack of size $r$ is defined in the following definition.
\begin{definition}[Action-level robustness certification]
\label{def:Action-level robustness certification}
Given a clean dataset $D$ and state $s_t$, the action-level robustness certification states that for any poisoned dataset $\tilde{D} \in \mathcal{B}(D, r)$, the clean and poisoned policies produce the same action $\pi(s_t) = \tilde{\pi}(s_t)$ where $\pi = \mathcal{M}(D)$ and $\tilde{\pi} = \mathcal{M}(\tilde{D})$, with a probability of at least $1-\delta$.
\end{definition}

\subsection{Differential Privacy} 
\label{sec:Differential Privacy Framework}
DP quantifies privacy loss when releasing aggregate statistics or trained models on sensitive data~\citep{dwork2006calibrating, abadi_deep_2016, friedman2010data}. As DP can be used to measure the sensitivity of outputs to input perturbations, it is well aligned to use in certifications, leading to it being employed in multiple works~\citep{lecuyer_certified_2019, ma_data_2019, 10646689}. The remainder of this section will introduce key properties of DP as employed by our work, with more detailed explanations of the \emph{Approximate-DP} (ADP) and \emph{R\'{e}nyi-DP} (RDP) mechanisms deferred to \cref{app:ADP and RDP Definitions}.%

Our work relies upon two key principles of DP---the \emph{post-processing property}, that any computation applied to the output of a DP algorithm preserves the same DP guarantee~\citep{dwork2006calibrating}; and the \emph{outcomes guarantee}~\citep{liu2023enhancing, mironov_renyi_2017}, as explained in the following definition specifically for ADP and RDP.
\begin{definition}[Outcomes guarantee for ADP and RDP]
\label{def:outcomes guarantee}
A randomised function $\mathcal{M}$ is said to preserve a \emph{$(\mathcal{K},r)$-outcomes guarantee} if for any function $K\in\mathcal{K}$ such that for all datasets $D_1$ and $D_2\in\mathcal{B}(D_1,r)$, and for all measurable output sets $S \subseteq \operatorname{Range}(\mathcal{M})$ if
\begin{equation}
\label{equ:outcomes guarantee}
    \operatorname{Pr}[\mathcal{M}(D_1) \in S] \leq \operatorname{K} (\operatorname{Pr}[\mathcal{M}(D_2) \in S])\enspace.
\end{equation}
In ADP, the function family $\mathcal{K}$ is parameterized by $\epsilon, \delta$ as $\mathcal{K}_{\epsilon, \delta}(x)=e^{\epsilon} x + \delta$, while in RDP $\mathcal{K}$ is parameterized by $\epsilon, \alpha$ as $\mathcal{K}_{\epsilon, \alpha}(x)=(e^{\epsilon} x)^{\frac{\alpha-1}{\alpha}}$.
\end{definition}

\section{Approach}
Our novel certified defense employs a DP-based \emph{randomized training process} and provides two unique certification methods to construct both \emph{action-level} and \emph{policy-level} robustness certification against \emph{transition} and \emph{trajectory} level poisoning attacks. 

\subsection{Randomized Training Process} 
\label{sec:Randomized Training Process}
Our certification requires the training algorithm $\mathcal{M}$ to ensure the DP guarantee of its output policy $\pi$ with respect to the training dataset $D$. DP mechanisms introduce randomness into the training process by adding calibrated noise in updating the parameters, producing a randomized policy $\pi$. Empirically, this can be represented as a set of $p$ policy instances $(\hat{\pi}_1,\cdots,\hat{\pi}_p)$. As each instance undergoes the same training process, this can be easily parallelized for efficiency. For larger datasets, further efficiency gains can be achieved by training each instance on a subset $D_{sub} \subseteq D$. 

Our specific approach employs the Sampled Gaussian Mechanism (SGM)~\citep{mironov_renyi_2019} to ensure DP guarantee at the transition-level $\mathcal{B}_{tra}$, and adapts the DP-FEDAVG~\citep{mcmahan2017learning} for the trajectory-level $\mathcal{B}_{trj}$ DP guarantee. Details of the training algorithms are deferred to \cref{app:DP training algorithm}. For the remainder of this paper, $\mathcal{B}$ will represent either $\mathcal{B}_{tra}$ or $\mathcal{B}_{trj}$, depending on whether the applied DP training algorithm provides transition- or trajectory-level guarantees.

\subsection{Policy-level Robustness Certification}
\label{sec:Policy-level Robustness Certification}
Consider a DP training algorithm $\mathcal{M}$ (as described in \cref{sec:Randomized Training Process}) that preserves a $(\mathcal{K}, r)$-outcomes guarantee for the clean dataset $D$, producing the clean policy $\pi = \mathcal{M}(D)$. When the dataset is poisoned as $\tilde{D}$, the resulting policy is denoted as $\tilde{\pi}=\mathcal{M}(\tilde{D})$.
To certify the policy-level robustness as in \cref{def:policy-level robustness} in terms of the lower bound of expected cumulative reward, we denote the testing time expected cumulative reward of a policy $\pi$ as expressed by
\begin{equation}
\begin{aligned}
\label{eq:expected cumulative reward def}
    &J(\pi) = \mathop{\mathbb{E}}_{\sigma} \left[C(\pi)\right] 
    \quad\text{where} \quad C(\pi) = \mathop{\mathbb{E}}_{\xi} \left[\sum_{t=0}^{H-1} \gamma^t r_t | \pi\right] \enspace,
\end{aligned}
\end{equation}

that $\sigma$ represents the training randomness, and $\xi$ represents the environment randomness.

To provide bounds over the expected output of the DP mechanisms, we propose the following lemma that extends the outcomes guarantee from probability to the expected value.

\begin{lemma}[Expected Outcomes Guarantee for ADP and RDP]
\label{lem:expected outcomes guarantee}
If an $\mathcal{M}$ that produces bounded outputs in $[0, b], b \in \mathbb{R}^{+}$ satisfies $(\mathcal{K}, r)$-outcomes guarantee, then for any $\tilde{D} \in \mathcal{B}(D, r)$ the expected value of the outputs of the $\mathcal{M}$ must satisfy: If $\mathcal{K}$ denotes the function family of ADP $\mathcal{K}_{\epsilon, \delta}$,
\begin{equation}
    e^{-\epsilon}(\mathbb{E}[\mathcal{M}(D)]-b\delta) \leq \mathbb{E}[\mathcal{M}(\tilde{D})] \leq e^{\epsilon}\mathbb{E}[\mathcal{M}(D)]+b\delta \enspace.
\end{equation}
Similarly, if $\mathcal{K}$ denotes the function family of RDP $\mathcal{K}_{\epsilon, \alpha}$,
\begin{equation}
e^{-\epsilon}(b^{-1/\alpha}\mathbb{E}[\mathcal{M}(D)])^{\frac{\alpha}{\alpha-1}} \leq \mathbb{E}[\mathcal{M}(\tilde{D})] \leq  b^{1/\alpha} (e^{\epsilon} \mathbb{E}[\mathcal{M}(D)])^{(\alpha-1)/\alpha}\enspace,
\end{equation}
where the expectation is taken over the randomness in $\mathcal{M}$.
\end{lemma}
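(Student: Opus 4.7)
The plan is to lift the probability-level outcomes guarantee to an expectation-level guarantee via the layer-cake formula: for any random variable $Y$ supported on $[0,b]$, one has $\mathbb{E}[Y] = \int_{0}^{b}\Pr[Y>t]\,dt$. Setting $Y := \mathcal{M}(D)$ and $\tilde{Y} := \mathcal{M}(\tilde{D})$, and taking the measurable sets $S_t := \{y : y>t\}$, the $(\mathcal{K},r)$-outcomes guarantee yields, for every $t\in[0,b]$, a pointwise tail bound $\Pr[Y>t] \leq K(\Pr[\tilde{Y}>t])$. I will integrate this pointwise bound over $t$ and apply it in both directions, swapping the roles of $D$ and $\tilde{D}$, which is valid because $\tilde{D}\in\mathcal{B}(D,r)$ iff $D\in\mathcal{B}(\tilde{D},r)$ (the edit distance is symmetric).

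For the ADP case, the outcomes map $K_{\epsilon,\delta}(x) = e^{\epsilon}x + \delta$ is affine, so integration commutes termwise: $\int_0^b\bigl(e^{\epsilon}\Pr[\tilde{Y}>t] + \delta\bigr)\,dt = e^{\epsilon}\mathbb{E}[\tilde{Y}] + b\delta$. This gives $\mathbb{E}[Y]\leq e^{\epsilon}\mathbb{E}[\tilde{Y}] + b\delta$, which rearranges to the stated lower bound $\mathbb{E}[\tilde{Y}]\geq e^{-\epsilon}(\mathbb{E}[Y]-b\delta)$; the upper bound follows by the symmetric application of the guarantee.

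The main obstacle will be the RDP case, where $K_{\epsilon,\alpha}(x) = (e^{\epsilon}x)^{(\alpha-1)/\alpha}$ is a strictly concave power map (since $\alpha>1$ makes the exponent lie in $(0,1)$), so integrating the pointwise bound $\Pr[Y>t]\leq (e^{\epsilon}\Pr[\tilde{Y}>t])^{(\alpha-1)/\alpha}$ does not directly produce a quantity expressed in $\mathbb{E}[\tilde{Y}]$. I will resolve this by applying Jensen's inequality for the concave map $u\mapsto u^{(\alpha-1)/\alpha}$ against the uniform probability measure $dt/b$ on $[0,b]$; after rescaling this yields $\int_0^b f(t)^{(\alpha-1)/\alpha}\,dt \leq b^{1/\alpha}\bigl(\int_0^b f(t)\,dt\bigr)^{(\alpha-1)/\alpha}$. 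Taking $f(t) = \Pr[\tilde{Y}>t]$ and pulling the constant $e^{\epsilon(\alpha-1)/\alpha}$ outside the integral gives $\mathbb{E}[Y] \leq b^{1/\alpha}(e^{\epsilon}\mathbb{E}[\tilde{Y}])^{(\alpha-1)/\alpha}$. Raising both sides to the $\alpha/(\alpha-1)$ power and isolating $\mathbb{E}[\tilde{Y}]$ recovers the claimed lower bound $e^{-\epsilon}(b^{-1/\alpha}\mathbb{E}[Y])^{\alpha/(\alpha-1)}$; swapping the roles of $D$ and $\tilde{D}$ throughout recovers the upper bound. The only subtlety is to verify that the direction of Jensen's inequality is the one producing an upper bound on the integral of the concave function applied to a tail probability, which holds because $x^{(\alpha-1)/\alpha}$ is concave rather than convex on $[0,1]$.
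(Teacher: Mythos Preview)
Your proposal is correct and follows essentially the same approach as the paper: the layer-cake representation of the expectation, direct integration of the affine bound for ADP, a concavity-based bound for RDP, and the symmetric role of $D$ and $\tilde D$ for the reverse inequalities. The only cosmetic difference is that the paper phrases the RDP step via H\"older's inequality with $f\equiv 1$ and exponents $p=\alpha$, $q=\alpha/(\alpha-1)$, which is equivalent to your Jensen argument for the concave map $u\mapsto u^{(\alpha-1)/\alpha}$ against the uniform measure on $[0,b]$ and yields the identical factor $b^{1/\alpha}$.
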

\begin{proof}
While a full proof is contained within \cref{app:proof of expected outcomes guarantee}, here we present an informative sketch. %
The upper bound of the expected value can be obtained by integrating over the right-tail distribution function of the probabilities in \cref{equ:outcomes guarantee} by Fubini's Theorem~\citep{fubini1907}. The integral results of ADP and RDP can be derived by respectively employing \citet{lecuyer_certified_2019} and Hölder's Inequality. The lower bound follows by the symmetry in the roles of $D_1$, $D_2$ in DP, and by $K$ being strictly monotonic. 
\end{proof}
With this preliminary result, we now turn to the main result of this section, which is to establish that DP learning algorithms ensure policy-level robustness against poisoning attacks up to size $r$, with an extension to real-valued cumulative rewards deferred to \cref{app:Policy-level Robustness Certification for Real-valued Reward}.
\begin{theorem}[Policy-level robustness by outcomes guarantee]
\label{the:policy-level robustness}
Consider an RL environment with bounded cumulative reward in the range $[0, b], b \in \mathbb{R}^+$, as well as a randomized offline RL policy $\pi=\mathcal{M}(D)$ constructed by the learning algorithm $\mathcal{M}$ using training dataset $D$.
If $\mathcal{M}$ preserves a ADP $(\mathcal{K}, r)$-outcomes guarantee, then each $K\in\mathcal{K}_{\epsilon, \delta}$ with corresponding $\epsilon, \delta$ satisfies the policy-level robustness of size $r$ for any poisoned dataset $\tilde{D} \in \mathcal{B}(D, r)$ as
\begin{equation}
\label{eq:ADP expected outcomes guarantee}
    J(\tilde{\pi}) \geq \underline{J}_{r}(\tilde{\pi}) = e^{-\epsilon}(J(\pi)-b\delta) \enspace.
\end{equation}
If $\mathcal{M}$ preserves a RDP $(\mathcal{K}, r)$-outcomes guarantee, then each $K\in\mathcal{K}_{\epsilon, \alpha}$ with corresponding $\epsilon, \alpha$ satisfies the policy-level robustness of size $r$ as
\begin{equation}
\label{eq:RDP expected outcomes guarantee}
    J(\tilde{\pi}) \geq \underline{J}_{r}(\tilde{\pi}) =  e^{-\epsilon}(b^{-1/\alpha}J(\pi))^{\frac{\alpha}{\alpha-1}} \enspace.
\end{equation}
\end{theorem}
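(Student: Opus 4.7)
The plan is to exhibit the expected cumulative reward as a post-processing of the randomized training output and then invoke \cref{lem:expected outcomes guarantee} directly. Observe that once the policy $\pi$ is fixed, $C(\pi)=\mathbb{E}_{\xi}[\sum_{t=0}^{H-1}\gamma^t r_t \mid \pi]$ is a \emph{deterministic} function of $\pi$ (the environment randomness $\xi$ has been integrated out), and by the bounded-reward hypothesis $C(\pi)\in[0,b]$. Thus $F := C \circ \mathcal{M}$ is a randomized mapping from datasets to $[0,b]$ whose sole randomness comes from the training noise $\sigma$, and $\mathbb{E}_\sigma[F(D)] = J(\pi)$, $\mathbb{E}_\sigma[F(\tilde{D})] = J(\tilde{\pi})$ by \cref{eq:expected cumulative reward def}.

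Next, I would apply the post-processing property of DP stated in \cref{sec:Differential Privacy Framework}: since $\mathcal{M}$ preserves a $(\mathcal{K},r)$-outcomes guarantee and $C$ is a deterministic measurable function of the policy, $F=C\circ\mathcal{M}$ preserves the same $(\mathcal{K},r)$-outcomes guarantee (one checks that for any measurable $S\subseteq[0,b]$, the preimage $C^{-1}(S)$ is a measurable subset of $\operatorname{Range}(\mathcal{M})$, and then invokes \cref{equ:outcomes guarantee} directly on this preimage). This reduces the theorem to an application of \cref{lem:expected outcomes guarantee} to the bounded randomized function $F$.

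For the ADP branch, the lower bound in \cref{lem:expected outcomes guarantee} applied to $F$ gives
\begin{equation*}
    J(\tilde{\pi}) = \mathbb{E}_\sigma[F(\tilde{D})] \;\geq\; e^{-\epsilon}\bigl(\mathbb{E}_\sigma[F(D)] - b\delta\bigr) \;=\; e^{-\epsilon}\bigl(J(\pi) - b\delta\bigr),
\end{equation*}
which is exactly \cref{eq:ADP expected outcomes guarantee}. The RDP branch follows identically from the RDP lower bound of the lemma, yielding \cref{eq:RDP expected outcomes guarantee}.

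I do not anticipate a serious technical obstacle: the heavy lifting (the Fubini-plus-Hölder argument converting the probabilistic outcomes guarantee into an expectation-level bound) has already been done in \cref{lem:expected outcomes guarantee}. The only point requiring care is the post-processing step, where one must be precise that $C(\pi)$ is deterministic conditional on $\pi$ so that the environment randomness does not need to be woven into the DP bookkeeping, and that the range of $F$ genuinely lies in $[0,b]$ so that the lemma's boundedness hypothesis is met. Once these are in place the theorem reduces to a direct substitution.
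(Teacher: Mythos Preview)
Your proposal is correct and follows essentially the same route as the paper: define $C\circ\mathcal{M}$, invoke the post-processing property to transfer the $(\mathcal{K},r)$-outcomes guarantee to this composition, and then apply \cref{lem:expected outcomes guarantee} to convert the probability-level guarantee into the expectation-level bounds of \cref{eq:ADP expected outcomes guarantee} and \cref{eq:RDP expected outcomes guarantee}. If anything, you are slightly more explicit than the paper in spelling out that $C$ is deterministic once $\pi$ is fixed and that the range lies in $[0,b]$, but the underlying argument is identical.
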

\begin{proof}
    This result is a direct consequence of the $(\mathcal{K}, r)$-outcomes guarantee and the post-processing property, as $C(\tilde{\pi})=C(\mathcal{M}(\tilde{D}))$ is a post-computation applied to the output of the DP mechanism $\mathcal{M}$, hence it satisfies the same $(\mathcal{K}, r)$-outcomes guarantee by the post-processing property. By \cref{lem:expected outcomes guarantee}, the expected value $J(\tilde{\pi})=\mathbb{E}[C(\tilde{\pi})]$ and $J(\pi)=\mathbb{E}[C(\pi)]$ satisfy the inequality in \cref{eq:ADP expected outcomes guarantee} and \cref{eq:RDP expected outcomes guarantee} for ADP and RDP respectively. 
\end{proof}

To compute the policy-level robustness using \cref{the:policy-level robustness}, we need to obtain the lower bound of $J(\pi)$ to substitute into the \cref{eq:ADP expected outcomes guarantee} and \cref{eq:RDP expected outcomes guarantee}. Consider the cumulative reward of the policy $\pi$ as a random variable $X=\sum_t^{H} \gamma^t r_t$ where $J(\pi)=\mathop{\mathbb{E}}_{\sigma,\xi}\left[X\right]$. The estimations of $X$ can be obtained by playing the games $m$ times using the trained policy instances $(\hat{\pi}_1,\cdots,\hat{\pi}_p)$, and obtain its empirical Cumulative Distribution Function (CDF) $\hat{F_{X}}(x)$. By the Dvoretzky–Kiefer–Wolfowitz inequality~\citep{dvoretzky1956asymptotic}, the true CDF $F_{X}(x)$ must be bounded by an empirical CDF $\hat{F_{X}}(x)$ of a finite sample size $m$ with probability at least $1-\delta$ as
\begin{equation}
\hat{F_{X}}(x) - \varepsilon \leq F_{X}(x) \leq \hat{F_{X}}(x) + \varepsilon \quad \text{where} \quad \varepsilon = \sqrt{\frac{\ln \frac{2}{\delta}}{2m}} \enspace.
\end{equation}
The expected value of the random variable $X$ in the bounded range $[0,b]$ can be expressed by the true CDF $F_{X}(x)$ and bounded by the empirical CDF $\hat{F_{X}}(x)$ as
\begin{equation}
    J(\pi)=\mathop{\mathbb{E}}_{\sigma,\xi}[X] = \int_{0}^{b} (1-F_{X}(x))\,dx \geq \int_{0}^{b} (1 - (\hat{F_X}(x)-\varepsilon)) \, dx \enspace, 
\end{equation}
and thus allows us to construct the lower bound $\underline{J}_{r}(\tilde{\pi})$, as required for policy-level certification. %

\subsection{Action-level Robustness Certification}
\label{sec:Action-level Robustness Certification}
In this section, we propose a method for certifying action-level robustness as in \cref{def:Action-level robustness certification} in terms of the stability of output actions. To achieve this, we begin by considering the decision-making process of a policy $\pi$ given state $s_t$ in a discrete action space $\mathcal{A}=\{A_1,\cdots,A_{L}\}$. For each instance $\hat{\pi_i}$ of the policy, the action $a_{t,i}$ is selected based on the highest \emph{action-value} $a_{t,i} = \arg\max_{a_l\in \mathcal{A}}Q_{\hat{\pi_i}}(s_t, a_l)=\mathbb{E}_{\tilde{\pi}_i} \left[ \sum_{t=0}^{H-1} \gamma^t r_{t} \mid s_0 = s_t, a_0 = a_l \right]$. Without loss of generality, we denote the action $a_t$ chosen by the randomized policy $\pi$ as the one with the highest \emph{inferred scores} $I_{A_l}(s_t, \pi)$, where $\sum_{A_l \in \mathcal{A}} I_{A_l}(s_t, \pi) =1$ and $I_{A_l}(s_t, \pi) \in [0,1]$. The inferred score function of the randomized policy $\pi$ takes the form 
\begin{equation}
    I_{A_l}(s_t, \pi) = \operatorname{Pr}[\arg\max_{a_i} Q_{\pi}(s_t, a_i)=A_l]\enspace,
\end{equation}
indicating that the action is induced as the most likely one, which can be estimated unbiasedly by using a majority vote among all policy instances. We then propose the following lemma, which extends the outcome guarantees to inferred scores.

\begin{lemma}[Inferred scores outcomes guarantee] If $\mathcal{M}$ preserves a $(\mathcal{K}, r)$-outcomes guarantee for a dataset $D$, and there exist an $I$ that maps the learned policy $\pi=\mathcal{M}(D)$ and a state $s_t$ to an inferred score, then for any $K \in \mathcal{K}$, it must hold that for any action $A_l \in \mathcal{A}$ and any policy $\tilde{\pi}=\mathcal{M}(\tilde{D})$ trained with dataset $\tilde{D} \in \mathcal{B}(D, r)$: 
\begin{equation}
\label{eq:Inferred scores outcomes guarantee}
    K^{-1}(I_{A_l}(s_t, \pi)) \leq I_{A_l}(s_t, \tilde{\pi}) \leq K(I_{A_l}(s_t, \pi)) \enspace.
\end{equation}
\label{lem:Inferred scores outcomes guarantee}
\end{lemma}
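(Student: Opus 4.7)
\begin{prf}
The plan is to realise the inferred score as a probability of the randomised training output landing in a fixed measurable subset of the policy space, and then apply the $(\mathcal{K},r)$-outcomes guarantee directly to that set.

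First I would fix a state $s_t$ and an action $A_l \in \mathcal{A}$, and define the event
\begin{equation*}
S_{A_l} \;=\; \bigl\{\pi \in \operatorname{Range}(\mathcal{M}) \,:\, \arg\max_{a_i \in \mathcal{A}} Q_{\pi}(s_t,a_i) = A_l\bigr\} \enspace.
\end{equation*}
Since $Q_{\pi}(s_t,\cdot)$ is a deterministic function of the (random) policy $\pi$, $S_{A_l}$ is a measurable subset of $\operatorname{Range}(\mathcal{M})$. By the definition of the inferred score,
\begin{equation*}
I_{A_l}(s_t,\pi) \;=\; \Pr[\mathcal{M}(D) \in S_{A_l}] \quad\text{and}\quad I_{A_l}(s_t,\tilde\pi) \;=\; \Pr[\mathcal{M}(\tilde D) \in S_{A_l}] \enspace.
\end{equation*}
Equivalently, this step just invokes the post-processing property: the map from $\pi$ to the indicator $\mathbf{1}[\arg\max_{a_i} Q_{\pi}(s_t,a_i) = A_l]$ is a deterministic post-computation, so the composition inherits the $(\mathcal{K},r)$-outcomes guarantee.

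Next, I would apply \cref{def:outcomes guarantee} to the set $S_{A_l}$ with $D_1 = D$, $D_2 = \tilde D \in \mathcal{B}(D,r)$, which immediately yields the upper bound
\begin{equation*}
I_{A_l}(s_t,\tilde\pi) \;\leq\; K\bigl(I_{A_l}(s_t,\pi)\bigr) \enspace.
\end{equation*}
For the lower bound, I would use the symmetry of the neighbouring relation: $D \in \mathcal{B}(\tilde D, r)$ whenever $\tilde D \in \mathcal{B}(D,r)$, since the edit distance $|\cdot \ominus \cdot|$ appearing in \cref{defn:trajectory-level} and \cref{defn:transition-level} is symmetric. Applying the $(\mathcal{K},r)$-outcomes guarantee again, this time with the roles of $D$ and $\tilde D$ swapped, gives $I_{A_l}(s_t,\pi) \leq K(I_{A_l}(s_t,\tilde\pi))$. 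Since $K$ is strictly monotone (both $\mathcal{K}_{\epsilon,\delta}$ and $\mathcal{K}_{\epsilon,\alpha}$ from \cref{def:outcomes guarantee} are strictly increasing on $[0,1]$), it is invertible on its range, and rearranging gives $K^{-1}(I_{A_l}(s_t,\pi)) \leq I_{A_l}(s_t,\tilde\pi)$, completing the two-sided inequality.

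The main obstacle I anticipate is purely a bookkeeping one, namely justifying the symmetry step cleanly: one has to confirm that $\mathcal{B}(D,r)$ is a symmetric neighbourhood in both the trajectory- and transition-level settings, and that $K^{-1}$ is well-defined on the relevant range (strict monotonicity of the ADP and RDP families takes care of this, but it should be spelled out to avoid a circularity where $I_{A_l}(s_t,\pi) = 0$ or falls outside $\operatorname{Range}(K)$). No calculation beyond inverting the explicit forms $K_{\epsilon,\delta}$ and $K_{\epsilon,\alpha}$ is required.
\end{prf}
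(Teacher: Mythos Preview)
Your proposal is correct and follows exactly the same route as the paper: define the event $S=\{\pi:\arg\max_{a_i} Q_{\pi}(s_t,a_i)=A_l\}$, invoke post-processing so that $I\circ\mathcal{M}$ inherits the $(\mathcal{K},r)$-outcomes guarantee, and obtain the two-sided inequality from the symmetry of the neighbouring relation together with strict monotonicity of $K$. One small bookkeeping slip: with $D_1=D$ and $D_2=\tilde D$ the outcomes guarantee in \cref{def:outcomes guarantee} reads $\Pr[\mathcal{M}(D)\in S]\leq K(\Pr[\mathcal{M}(\tilde D)\in S])$, which after inversion gives the \emph{lower} bound, not the upper one; the upper bound comes from the swapped assignment $D_1=\tilde D$, $D_2=D$. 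Since you apply both directions anyway, the argument is unaffected.
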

\begin{prf}
    The composition $I \circ \mathcal{M}$ preserves the same outcomes guaranteed by the post-processing property. The reverse inequality is derived from the symmetry in the roles of the datasets in DP, with $K$ being strictly monotonic. The outcomes guarantee can be directly converted to \cref{eq:Inferred scores outcomes guarantee} by defining $S = \{\pi:\arg\max_{a_i} Q_{\pi}(s_t, a_i)=A_l\}$.
\end{prf}
With the bounds over inferred scores by outcomes guarantee, we present the theorem that specifies the conditions under which a DP learning algorithm maintains action-level robustness against poisoning attacks of size $r$ in any arbitrary state $s_t$.
\begin{theorem}[Action-level robustness by outcomes guarantee] 
Consider an offline RL training dataset $D$, a randomized learning algorithm $\mathcal{M}$ that satisfies a $(\mathcal{K}, r)$-outcomes guarantee and outputs a policy $\pi = \mathcal{M}(D)$. Let $I$ be the inferred score function, and $s_t$ be an arbitrary test-time input state with the corresponding output action $a_t = \arg\max_{a_l\in \mathcal{A}}I_{a_l}(s_t, \pi)$. If there exist $K_1, K_2 \in \mathcal{K}$ such that:
\begin{equation}
\label{eq:Action-level robustness by outcomes guarantee condition}
    K^{-1}_{1}(I_{a_t}(s_t, \pi)) > \max_{a_l\in \mathcal{A}\setminus\{a_l\}} K_{2}(I_{a_l}(s_t,\pi))
\end{equation}
then the algorithm preserves action-level robustness at state $s_t$ under a poisoning attack of size $r$.    
\label{the:action level robustness}
\end{theorem}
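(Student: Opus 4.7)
The plan is to apply the inferred-scores outcomes guarantee (\cref{lem:Inferred scores outcomes guarantee}) twice --- once to bound the inferred score of the clean winner $a_t$ from below under any poisoned policy, and once (uniformly over the other labels) to bound every competing inferred score from above --- and then to chain the two bounds via the hypothesis so that $a_t$ remains the strict arg max under the poisoned policy.

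First I would fix an arbitrary $\tilde{D}\in\mathcal{B}(D,r)$ and let $\tilde{\pi}=\mathcal{M}(\tilde{D})$. Invoking \cref{lem:Inferred scores outcomes guarantee} with $K=K_1$ applied to the action $a_t$ yields the lower bound $I_{a_t}(s_t,\tilde{\pi})\geq K_1^{-1}(I_{a_t}(s_t,\pi))$. Next, for each $a_l\in\mathcal{A}\setminus\{a_t\}$, I would apply the same lemma but with $K=K_2$ in the upward direction to obtain $I_{a_l}(s_t,\tilde{\pi})\leq K_2(I_{a_l}(s_t,\pi))$. The hypothesis \cref{eq:Action-level robustness by outcomes guarantee condition} then closes the chain:
\begin{equation*}
    I_{a_t}(s_t,\tilde{\pi}) \;\geq\; K_1^{-1}\bigl(I_{a_t}(s_t,\pi)\bigr) \;>\; \max_{a_l\neq a_t} K_2\bigl(I_{a_l}(s_t,\pi)\bigr) \;\geq\; \max_{a_l\neq a_t} I_{a_l}(s_t,\tilde{\pi})\enspace.
\end{equation*}
Hence $a_t$ continues to be the unique arg max of the inferred scores under $\tilde{\pi}$, so the decision rule forces $\tilde{\pi}(s_t)=a_t=\pi(s_t)$. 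Since $\tilde{D}$ was arbitrary in $\mathcal{B}(D,r)$, action-level robustness at $s_t$ follows.

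The main obstacle is conceptual rather than technical: one must recognise that the hypothesis permits two \emph{different} functions $K_1,K_2$ from the family $\mathcal{K}$ to be used on the two sides of the inequality. This decoupling is what actually makes the resulting certificate useful in practice --- for instance, in the RDP case it allows independent optimisation of the order $\alpha$ for the winner and for the runners-up, tightening the certified radius relative to forcing a single $K$ throughout. Beyond that observation, the argument is a direct combination of \cref{lem:Inferred scores outcomes guarantee} with the argmax decision rule, and no additional estimation machinery is needed because the claim is stated in terms of the exact inferred scores (the $1-\delta$ slack in \cref{def:Action-level robustness certification} would only enter later, through Clopper--Pearson-style confidence bounds on the empirical majority-vote estimates of $I_{a_l}(s_t,\pi)$).
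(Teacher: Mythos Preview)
Your proposal is correct and follows essentially the same argument as the paper: apply \cref{lem:Inferred scores outcomes guarantee} to lower-bound $I_{a_t}(s_t,\tilde\pi)$ via $K_1^{-1}$ and to upper-bound each competitor via $K_2$, then use the hypothesis and transitivity to conclude that $a_t$ remains the strict argmax. Your write-up is in fact slightly more careful than the paper's in distinguishing the non-strict inequalities coming from the lemma from the strict inequality coming from the hypothesis.
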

\begin{proof}
    As the policy selects the action that maximises the inferred score, the objective of certifying action-level robustness is equivalent to proving that the inferred score of $a_t$ is larger than the inferred score of any other actions $a_l\in \mathcal{A}\setminus\{a_t\}$ for any poisoned dataset $\tilde{D} \in \mathcal{B}(D, r)$, as
    \begin{equation}
    \label{eq:Action-level robustness condition in proof}
        \begin{aligned}
            &\forall\tilde{D} \in \mathcal{B}(D, r)\\
            &I_{a_t}(s_t, \mathcal{M}(\tilde{D})) > \max_{a_l\in \mathcal{A}\setminus\{a_l\}} I_{a_l}(s_t,\mathcal{M}(\tilde{D})) \enspace.
        \end{aligned}
    \end{equation}
     Given $\mathcal{M}$ preserves a $(\mathcal{K}, r)$-outcomes guarantee, then for any $K_1, K_2 \in \mathcal{K}$, the following inequalities can be derived by \cref{lem:Inferred scores outcomes guarantee} as
     \begin{equation}
         \begin{aligned}
             I_{a_t}(s_t, \mathcal{M}(\tilde{D})) &> K_1^{-1}(I_{a_t}(s_t, \pi))\\
             \max_{a_l\in \mathcal{A}\setminus\{a_l\}} I_{a_l}(s_t,\mathcal{M}(\tilde{D})) &< \max_{a_l\in \mathcal{A}\setminus\{a_l\}} K_2(I_{a_l}(s_t,\pi)) \enspace.
         \end{aligned}
     \end{equation}

     Therefore, if there exists $K_1$ and $K_2$ that satisfy the condition in \cref{eq:Action-level robustness by outcomes guarantee condition}, the transitive property of inequalities ensures that the condition in \cref{eq:Action-level robustness condition in proof} is also satisfied.
\end{proof}

The maximum tolerable poisoning size $r_t$ can be calculated while maintaining action-level robustness by way of the policy instances $(\hat{\pi}_1,\cdots,\hat{\pi}_p)$ and the condition in \cref{the:action level robustness}. At each time, the selected action $a_t$ is that with the highest inferred score across the policy instances, with upper and lower bounds estimated simultaneously through sampling the outputs of the policy instances to a confidence level of at least $1 - \delta$ by the \textsc{SimuEM} method~\citep{jia_intrinsic_2020}. We substitute the lower bound of $I_{a_{t}}(s_t, \pi)$ and the upper bound of $\max_{a_t\in \mathcal{A}\setminus\{a_t\}} I_{a_{t2}}(s_t, \pi)$ into the condition of \cref{eq:Action-level robustness by outcomes guarantee condition}. By \cref{the:action level robustness} we can then certify whether action-level robustness is achieved at state $s_t$ under a poisoning size $r$. The maximum tolerable poisoning size $r_t$ is determined through a binary search over the domain of $\mathcal{K}$ to find the $K_1$ and $K_2$ that satisfies the condition for maximising $r$. We defer the details of this process to \cref{app:Additional Details of Action-level Certification}.

\section{Experiments}
In this section, we evaluate our proposed certified defenses under scenarios of either transition- or trajectory-level poisoning (\cref{sec:Multi-level Poisoning}) for policy-level and action-level robustness (\cref{sec:Multi-level Robustness Certification}). To facilitate these, we conducted evaluations using Farama Gymnasium~\citep{towers_gymnasium_2023} discrete Atari games Freeway and Breakout, as well as the continuous action space Mujoco game Half Cheetah. We also employed the D4RL~\citep{fu2020d4rl} dataset and the Opacus~\citep{opacus} DP framework. Our environments were trained using DQN~\citep{mnih_playing_2013}, IQL~\citep{kostrikov2021offline} and C51~\citep{bellemare2017distributional}, implemented with Convolutional Neural Networks (CNN) in PyTorch on a NVIDIA 80GB A100 GPU. Further results considering the robustness of these defenses to empirical attacks are presented within \cref{app:empirical attacks}.

Our offline RL datasets consist of $2$ million transitions for each game, with corresponding trajectory counts of $976$ for Freeway, $3,648$ for Breakout, and $2,000$ for Half Cheetah. In all experiments, the sample rates $q$ in the DP training algorithms were adjusted to achieve a batch size of $32$, with varying noise multipliers $\sigma$ as detailed in the results. Uncertainties were estimated within a confidence interval suitable for $\delta = 0.001$. For each game, the number of policy instances $p$, as described in \cref{sec:Randomized Training Process}, is set to $50$. The number of estimations of expected cumulative reward $m$ is set to $500$, with $10$ estimations per policy instance, as detailed in \cref{sec:Policy-level Robustness Certification}.

\subsection{Action-level Robustness Results}
\label{sec:Action-level Robustness Results}
\begin{figure}[ht]
    \centering
    \includegraphics[width=\linewidth]{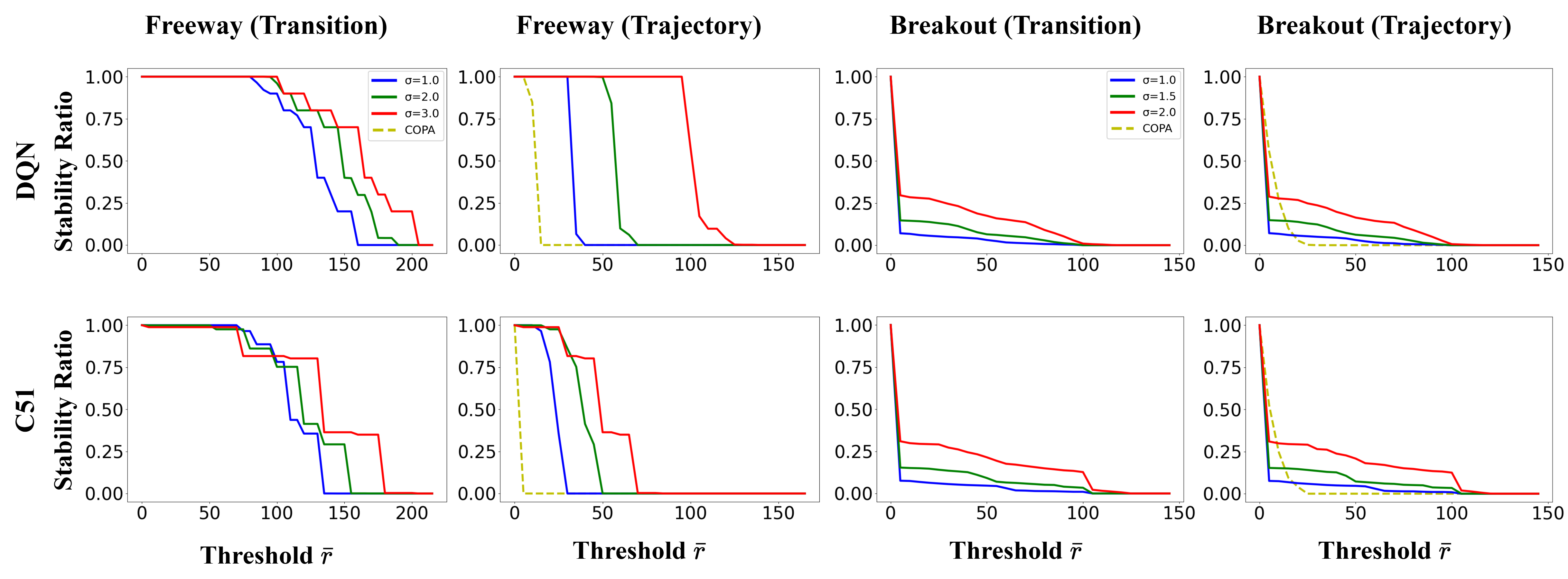}
    \caption{Stability ratio against the tolerable poisoning threshold $\Bar{r}$ for \emph{action-level robustness} using DQN and C51 for the Freeway and Breakout environments under transition- or trajectory-level poisoning attacks. Blue, Green and Red lines represent different noise levels $\sigma$ during the randomized training process as $\sigma = \{1, 2, 3\}$ for Freeway and $\{1, 1.5, 2\}$ for Breakout, while the yellow dashed line denotes COPA, which can only be calculated for trajectory-level poisoning.}
    \label{fig:action-level certification}
\end{figure}

\begin{table}[ht]
\centering
\resizebox{\textwidth}{!}{%
\begin{tabular}{@{}ccccccccc@{}}
\toprule
\multirow{3}{*}{Environment} & \multirow{3}{*}{Method}            & \multirow{3}{*}{Noise} & \multicolumn{2}{c}{Avg. Cumulative Reward}  & \multicolumn{4}{c}{Action-level Mean Radii}       \\ \cmidrule(l){4-9} 
                             &                                    &                        & \multirow{2}{*}{DQN} & \multirow{2}{*}{C51} & \multicolumn{2}{c}{DQN} & \multicolumn{2}{c}{C51} \\
                             &                                    &                        &                      &                      & Transition & Trajectory & Transition & Trajectory \\ \midrule
\multirow{5}{*}{Freeway}     & \multirow{4}{*}{Proposed (RDP)} & 0.0                    & 20.1                 & 21.3                 & N/A        & N/A        & N/A        & N/A        \\
                             &                                    & 1.0                    & 16.9                 & 16.1                 & 128.1      & 32.6       & 111.6      & 22.1       \\
                             &                                    & 2.0                    & 16.6                 & 15.3                 & 145.5      & 58.7       & 119.3      & 37.8       \\
                             &                                    & 3.0                    & 16.0                 & 15.1                 & 160.0      & 102.4      & 134.5      & 49.7       \\
                             & COPA                               & N/A                    & 16.4                 & 16.4                 & N/A        & 10.1       & N/A        & 9.7        \\ \midrule
\multirow{5}{*}{Breakout}    & \multirow{4}{*}{Proposed (RDP)} & 0.0                    & 385.4                & 389.3                & N/A        & N/A        & N/A        & N/A        \\
                             &                                    & 1.0                    & 366.6                & 369.0                & 3.4        & 3.2        & 4.0        & 3.9        \\
                             &                                    & 1.5                    & 320.8                & 270.4                & 7.9        & 7.6        & 9.5        & 9.3        \\
                             &                                    & 2.0                    & 268.4                & 102.7                & 17.7       & 16.9       & 22.0       & 21.7       \\
                             & COPA                               & N/A                    & 325.7                & 330.1                & N/A        & 6.6        & N/A        & 6.3        \\ \bottomrule
\end{tabular}
}
\caption{Testing time average cumulative reward in a clean environment and the mean of maximum tolerable poisoning size $r_t$ of the action-level robustness.}%
\label{tab:acton-level certification results}
\end{table}

We will now evaluate action-level robustness across varying RL algorithms and environments for RDP, with additional ADP based experiments and statistics provided in \cref{app:Additional Results of Action-level Certification}. This analysis considers the \textbf{mean} and \textbf{maximum} value of $r_t$ across a set of evaluated trajectories as well the \textbf{stability ratio}~\citep{wu_copa_2022}. This latter metric represents the proportion of time steps in a trajectory where the maximum tolerable poisoning size $r_t$ of action-level robustness is maintained under a poisoning attack of size up to a given threshold $\Bar{r}$ for a trajectory length $H$ by way of %
\begin{equation}\text{Stability Ratio} = \frac{1}{H}\sum_{t=0}^{H-1}\mathbbm{1}[r_{t}\geq \Bar{r}] \enspace.
\end{equation}

To interpret our results, it is important to emphasise that a higher stability ratio at larger thresholds signifies better certified robustness. Models trained with higher noise achieve stronger certified robustness, at the cost of clean performance decreasing in terms of the average cumulative rewards, as shown in \cref{tab:acton-level certification results}. In concert with \cref{fig:action-level certification}
it is clear that DQN produces a consistently higher certifications than C51 in Freeway, while matching performance in Breakout. These differences arise from DQNs ability to adapt to noisey training process, allowing it to ameliorate the impact of these perturbations without a significant drop in performance. It is also important to note that the Freeway consistently shows better action-level robustness than Breakout, suggesting that Freeway supports more stable and robust policies. 

Given the effectively interchangeable action-level robustness of the variants as reported in COPA~\citep{wu_copa_2022}, our comparisons are constructed against the basic PARL variant in the same setting, with the number of partitions set to $50$. 
For a fair comparison, the models trained at a noise level represented by the green line in each game achieve comparable performance to COPA in terms of Avg. Cumulative Reward, as shown in \cref{tab:acton-level certification results}. Our technique's maximum tolerable poisoning size---indicated by the x-axis intersection in \cref{fig:action-level certification}---is approximately $5$ times larger than other approaches, which confirms that our approach produces stronger robustness in states where certain actions are highly preferable, typically during critical moments. Additionally, our method achieves a higher mean value of the tolerable poisoning size across all steps as shown in \cref{tab:acton-level certification results}, demonstrating better certification for most states.

\subsection{Policy-level Robustness Results}
\begin{figure}[ht]
    \centering
    \includegraphics[width=\linewidth]{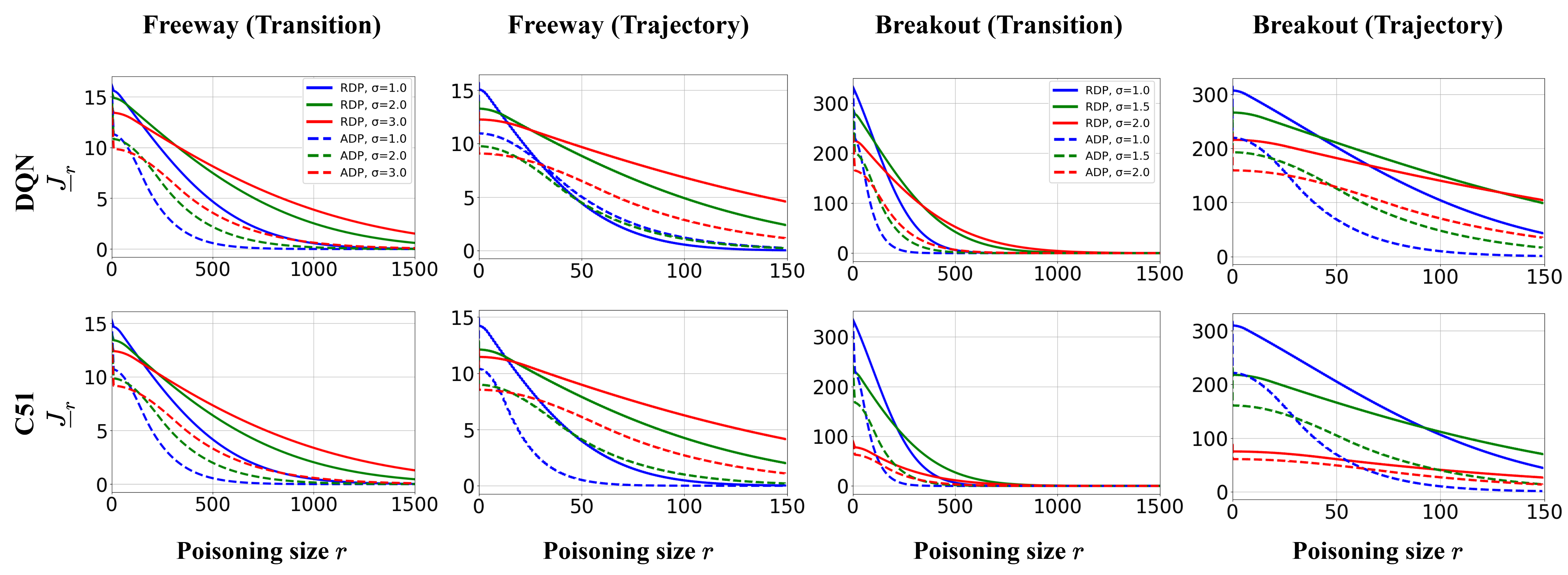}
    \caption{Policy-level robustness certifications,  capturing the lower bound of the expected cumulative reward $\underline{J}_r$ against poisoning size $r$ for Atari games. Solid and dashed lines represent RDP and ADP derived guarantees respectively, with colors indicating noise levels as per~\cref{fig:action-level certification}.}
    \label{fig:policy-level certification d}
\end{figure}

\begin{figure}[ht]
    \centering
    \includegraphics[width=0.55\linewidth]{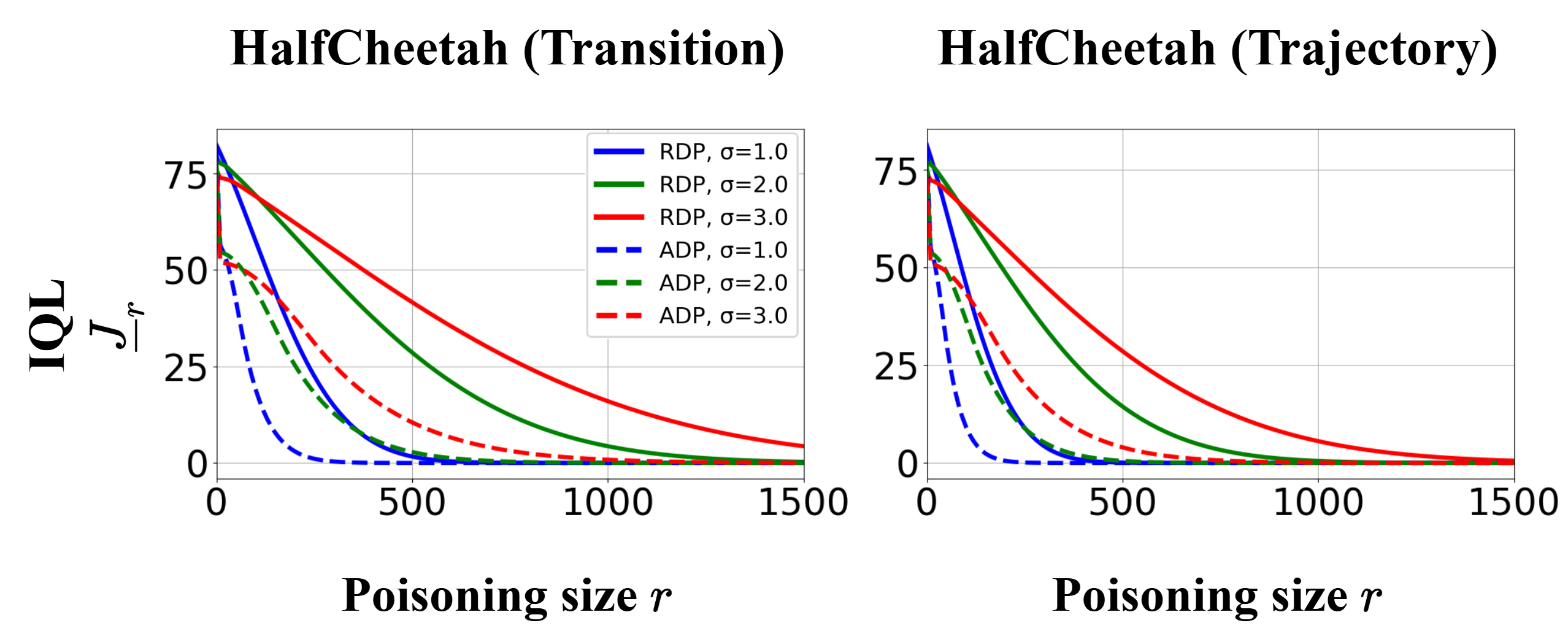}
    \caption{Policy-level robustness certification for the continuous action game Mujoco Half Cheetah, using RL algorithm IQL. The plot is formulated in the same way as \cref{fig:policy-level certification d}.}
    \label{fig:policy-level certification c}
\end{figure}

To assess policy-level robustness, we turn to the measure $\underline{J}_r(\pi)$, which directly reflects the policy-level robustness of the learned policy against a poisoning attack of size $\Bar{r}$ (as in \cref{def:policy-level robustness}), with \cref{fig:policy-level certification d} showing this for the discrete games Freeway and Breakout. We also consider the continuous game Half Cheetah as shown in \cref{fig:policy-level certification c}, where the benign training yielded an average cumulative reward of $96.47$ and randomized training with noise levels set at $\sigma = 1.0$, $2.0$, and $3.0$ yielded average cumulative rewards of $90.5$, $87.0$, and $83.4$, respectively.
The results can be compared from different aspects. In terms of the poisoning level, for the same policy certification $\underline{J}_r$, the tolerable poisoning size $r$ in transition-level certification is about $10$ times larger than in trajectory-level. Given that each trajectory consists of approximately $1,000$ transitions, the total number of transitions that can be altered in trajectory-level certification is actually higher than in transition-level. This aligns with the nature of the threat models: in trajectory-level poisoning, not all transition changes are assumed adversarial, whereas in transition-level poisoning, the adversary can more precisely target key transitions to minimize the number of modifications. 

In analyzing the influence of RL algorithms and certification methods, DQN consistently demonstrates higher robustness certification than C51, which is consistent with the analysis from the action-level certification. RDP's tight quantification of privacy loss, particularly in handling iterative function composition in deep networks, provides a significant advantage over ADP in all settings.

Lastly, our approach accommodates more general RL settings, in that it can be applied to both discrete and continuous action spaces, as well as deterministic and stochastic environments, and significantly improves upon the performance of extant techniques, namely COPA. As COPA certifies the cumulative reward for specific trajectories rather than the expected cumulative reward of the policy, our comparison with COPA is often implicit. The limited applicable scenarios of COPA stem from its reliance upon exhaustive tree search in certifying cumulative reward, which is fundamentally incompatible with environments involving randomness or continuous action spaces, and limiting the trajectory length to $400$ in Freeway and $75$ in Breakout due to the exponential growth of the tree size, while the default trajectory lengths are $2,000$ and $600$, respectively. As a result, the maximum cumulative reward COPA can certify is restricted to $5$ in Freeway and $2$ in Breakout. By contrast, our approach has no such limitations regarding environment settings or trajectory length. Furthermore, for Freeway, COPA only allows $0.008\%$ of trajectories in the training dataset to be poisoned while certifying less than a $50\%$ performance drop, whereas our method achieves a much higher ratio of $7.17\%$. We observe a similar delta in relative performance within the Breakout games, where COPA's ratio of $0.0075\%$ is significantly smaller than $2.05\%$ observed for our approach.

\section{Conclusions}
This work explored how certified defenses against poisoning attacks can be both constructed and enhanced in offline RL. To do this, we introduced a novel framework that leverages Differential Privacy mechanisms to provide the first practical certified defense in a general offline RL setting. While past works have only considered theoretical robustness bounds or are limited to specific RL settings, our framework is able to offer both action-level stability and policy-level lower bounds with respect to the expected cumulative reward of the learned policy. Furthermore, our experiments across a wide range of RL environments and algorithms demonstrate robust certifications in practical applications, significantly outperforming other state-of-the-art defense frameworks. 

Our work suggests several potential directions for future research. First, developing a unified DP training algorithm that simultaneously supports both transition- and trajectory-level certified defenses could significantly enhance robustness. Additionally, defense performance may be further improved by designing a more sophisticated noise injection mechanism that adapts noise levels dynamically, rather than uniform noise throughout the entire training process. 

\clearpage
\section{Acknowledgements}
This research was supported by The University of Melbourne’s Research Computing Services and the Petascale Campus Initiative. This work was also supported in part by the Australian Department of Defence through the Defence Science and Technology Group (DSTG). Sarah Erfani is in part supported by the Australian Research Council (ARC) Discovery Early Career Researcher Award (DECRA) DE220100680.

\bibliography{ref}
\bibliographystyle{iclr2025_conference}

\newpage
\appendix
\section{Appendix}

\subsection{ADP and RDP Definitions}
\label{app:ADP and RDP Definitions}
\begin{definition}[Approximate-DP]
\label{def:ADP}
A randomised function $\mathcal{M}$ is said to be $(\epsilon, \delta)$-Approximate DP (ADP) if for all datasets $D_1$ and $D_2$ for which $D_2 \in \mathcal{B}(D_1,1)$, and for all measurable output sets $S \subseteq \operatorname{Range}(\mathcal{M})$: %
\begin{equation}
\label{def:dp equation}
    \operatorname{Pr}[\mathcal{M}(D_1) \in S] \leq e^{\epsilon} \operatorname{Pr}[\mathcal{M}(D_2) \in S] + \delta\enspace,
\end{equation}
where $\epsilon > 0$ and $\delta \in [0,1)$ are chosen parameters.
\end{definition}
ADP with the privacy guarantee as expressed in \cref{def:dp equation} is the most commonly used format in DP research. 
Smaller values of the privacy budget $\epsilon$ restrict the (multiplicative) influence of a participant joining dataset $D_{2}$ to form $D_1$, thereby limiting the probability of any subsequent privacy breach. The confidence parameter $\delta$ relaxes this guarantee by allowing for the possibility that no bound is provided on privacy loss in the case of low-probability events.

To bound the residual risk from ADP, R\'{e}nyi-DP (RDP) was introduced by \citet{mironov_renyi_2017}. R\'{e}nyi-DP provide tighter quantification of privacy through sequences of function composition, as required when iteratively training a deep net on sensitive data, which leads to improved certifications in practice.

\begin{definition}[R\'{e}nyi-DP]
A randomised function $\mathcal{M}$ preserves $(\alpha, \epsilon)$-R\'{e}nyi-DP, with $\alpha>1, \epsilon>0$, if for all datasets $D_1$ and $D_2 \in \mathcal{B}(D_1,1)$:
\begin{equation}
\label{def:rdp equation}
    \mathrm{D}_{\alpha}\left(\mathcal{M}(D_1) \| \mathcal{M}\left(D_{2}\right)\right) \leq \varepsilon \enspace,
\end{equation}
where $\mathrm{D}_\alpha$ represents the R\'{e}nyi divergence of finite order $\alpha \neq 1$ between two distributions $P$ and $Q$ defined over the same probability space $\mathcal{X}$ with densities $p$ and $q$ as 
\begin{equation}
    \mathrm{D}_{\alpha}(P \| Q) \triangleq \frac{1}{\alpha-1} \ln \int_{\mathcal{X}} q(x)\left(\frac{p(x)}{q(x)}\right)^{\alpha} \mathrm{d} x\enspace.
\end{equation}
\end{definition}

The generalization to \cref{def:outcomes guarantee} incorporates \emph{group privacy}~\citep{dwork2006calibrating} to extend DP to adjacent datasets to pairs datasets that differ in up to $r$ data points $\mathcal{B}(D_1, r)$. The ADP's function family $\mathcal{K}$ is derived directly from the \cref{def:ADP}, while the RDP's family is obtained by applying Hölder's inequality to the integral of the density function in the Rényi divergence~\citep{mironov_renyi_2017}.

\subsection{DP Training Algorithms}
\label{app:DP training algorithm}

Several works have extended differential privacy to RL by developing privacy-preserving algorithms that balance the trade-off between model performance and privacy guarantees. To tackle the distinct challenges in RL, such as the sequential dependency, multi-sourced data, notable contributions have been made, including techniques for regret minimization RL with privacy guarantee~\citep{vietri2020private, dann2017unifying, garcelon_local_2021}, off-policy evaluation~\citep{balle2016differentially}, and distributional RL~\citep{ono2020locally}.

The proposed methods require the training algorithm $\mathcal{M}$ to preserve the DP guarantee of its output policy $\pi$ regarding the training dataset $D$. Depending on the DP training mechanisms, the trained private policy in the context of offline RL can achieve either transition- or trajectory-level DP guarantees, meaning the $\mathcal{B}$ used in the aforementioned DP definitions can be $\mathcal{B}_{tra}$ or $\mathcal{B}_{trj}$ respectively.

\paragraph{Transition-level DP Training Method SGM.} While numerous differential privacy (DP) mechanisms have been proposed and extensively studied in machine learning~\citep{abadi_deep_2016, mironov_renyi_2019}, most rely on adding noise directly to the training samples. Instead, the Sampled Gaussian Mechanism (SGM)~\citep{mironov_renyi_2019} introduces randomness through both noise injection and sub-sampling, providing a better privacy cost. In SGM, each element of the training batch is sampled without replacement with uniform probability $q$ from the training dataset. Additionally, Gaussian noise is added to the gradients during each weight update step. The training algorithm is illustrated in \cref{alg:sgm} When applied to a model $\mathcal{M}$, SGM preserves $(\alpha, \epsilon)$-RDP, where $\epsilon$ is determined by the parameters $(\alpha, \mathcal{M}, q, \sigma)$. This RDP guarantee can be further transformed into $(\epsilon, \delta)$-ADP using the conversion method described by \citet{balle_hypothesis_2019}. 

\begin{algorithm}
\caption{Sampled Gaussian Mechanism (SGM) for a Model $\mathcal{M}$ using Dataset $D$}
\label{alg:sgm}
\begin{algorithmic}[1]
\Require Dataset $D$ with $n$ samples, sampling ratio $q$, noise multiplier $\sigma$, number of iterations $T$, learning rate $\eta$
\Ensure Private model $\mathcal{M}$
\State Initialize model parameters $\theta_0$
\For{$t = 1, 2, \ldots, T$}
    \State Sample a mini-batch $B_t \subseteq D$ by selecting each element of $D$ with probability $q$ without replacement
    \State Compute gradients $\nabla \mathcal{L}(\theta_{t-1}; B_t)$ with respect to the mini-batch
    \State Clip gradients: $\bar{\nabla} \mathcal{L} = \frac{\nabla \mathcal{L}}{\max(1, \frac{\|\nabla \mathcal{L}\|}{C})}$ where $C$ is the clipping threshold
    \State Add Gaussian noise: $\tilde{\nabla} \mathcal{L} = \bar{\nabla} \mathcal{L} + \mathcal{N}(0, \sigma^2 C^2 I)$
    \State Update model parameters: $\theta_t = \theta_{t-1} - \eta \tilde{\nabla} \mathcal{L}$
\EndFor
\State \Return Differentially private model $\mathcal{M}$ with parameters $\theta_T$
\end{algorithmic}
\end{algorithm}

\paragraph{Trajectory-level DP Training Method DP-FEDAVG.} As demonstrated in the SGM, clipping per-sample gradients makes it unsuitable for trajectory-level DP, where the privacy cost needs to be accounted for on a per-trajectory basis. To address this, we utilize DP-FEDAVG~\citep{mcmahan2017learning}, initially designed for client-level privacy in federated learning, which can be adapted to scenarios where training data is naturally segmented, such as trajectory data in offline RL. The core idea of DP-FEDAVG is as follows: at each iteration $t$, a subset $B_t$ of trajectories is sampled from the dataset $D$ with probability $q$ without replacement. A single gradient $\nabla \mathcal{L}(\theta_{t-1}; \tau_t)$ is then computed and clipped with constant $C$ for each trajectory. An unbiased estimator of the average gradient of the subset is then calculated, with sensitivity bounded by the $C$ divided by the batch size. Finally, the Gaussian mechanism is applied with noise magnitude $\sigma$, and the model is updated using the noisy gradient. The details of the training algorithm is shown in \cref{alg:dp-fedavg}.

\begin{algorithm}
\caption{Model Training with DP-FEDAVG}
\label{alg:dp-fedavg}
\begin{algorithmic}[1]
\Require Dataset $D$, sampling ratio $q \in (0, 1)$, noise multiplier $\sigma$, clipping norm $C$, local epochs $E$, batch size $B$, learning rate $\eta$
\Ensure Private model $\mathcal{M}$
\State Initialize model parameters $\theta_0$
\For{each iteration $t \in [0, T - 1]$}
    \State $U_t \gets$ (sample with replacement trajectories from $D$ with probability $q$)
    \For{each trajectory $\tau_k \in U_t$}
        \State Clone current model $\theta_{\text{start}} \gets \theta_t$
        \For{each local epoch $i \in [1, E]$}
            \State $\mathcal{B} \gets$ (split $\tau$'s data into size $B$ batches)
            \For{each batch $b \in \mathcal{B}$}
                \State $\theta \gets \theta - \eta \nabla \mathcal{L}(\theta; b)$
                \State $\theta \gets \theta_{\text{start}} + \text{PerLayerClip}(\theta - \theta_{\text{start}}; C)$
            \EndFor
        \EndFor
        \State $\Delta_{t,k}^{\text{clipped}} \gets \theta - \theta_{\text{start}}$
    \EndFor
    \State $\Delta_t^{\text{avg}} \gets \frac{\sum_{k \in U_t} \Delta_{t,k}^{\text{clipped}}}{qK}$
    \State $\tilde{\Delta}_t^{\text{avg}} \gets \Delta_t^{\text{avg}} + \mathcal{N}\left(0, \left(\frac{\sigma C}{qK}\right)^2\right)$
    \State $\theta_{t+1} \gets \theta_t + \tilde{\Delta}_t^{\text{avg}}$
\EndFor
\end{algorithmic}
\end{algorithm}

In addition to the DP training algorithms discussed above, it is worth highlighting the existence of more advanced DP algorithms capable of offering tighter privacy guarantees, enhanced computational efficiency, or optimality analysis~\citep{gopi2021numerical, geng2015optimal}. These advancements enable more precise privacy bounds and contribute to further reinforcing the robustness of the certification for our proposed defense mechanism.
However, it is crucial to emphasize that the primary focus of our work is on establishing a general framework for integrating DP into certified defenses for offline RL. This framework is designed to be adaptable, allowing for the incorporation of more advanced DP mechanisms in future developments.

\subsection{Proof of the Expected Outcomes Guarantee}
\label{app:proof of expected outcomes guarantee}

\begin{lemma}[Expected Outcomes Guarantee for ADP and RDP]
Suppose a randomized function $\mathcal{M}$, with bounded output $[0, b], b \in \mathbb{R}^{+}$, satisfies $(\mathcal{K}, r)$-outcomes guarantee. Then for any $\tilde{D} \in \mathcal{B}(D, r)$, if $\mathcal{K}$ denotes the function family of ADP $\mathcal{K}_{\epsilon, \delta}$, the expected value of its outputs satisfies:
\begin{equation}
    e^{-\epsilon}(\mathbb{E}[\mathcal{M}(D)]-b\delta) \leq \mathbb{E}[\mathcal{M}(\tilde{D})] \leq e^{\epsilon}\mathbb{E}[\mathcal{M}(D)]+b\delta \enspace,
\end{equation}
if $\mathcal{K}$ denotes the function family of RDP $\mathcal{K}_{\epsilon, \alpha}$, the expected value of its outputs satisfies:
\begin{equation}
e^{-\epsilon}(b^{-1/\alpha}\mathbb{E}[\mathcal{M}(D)])^{\frac{\alpha}{\alpha-1}} \leq \mathbb{E}[\mathcal{M}(\tilde{D})] \leq  b^{1/\alpha} (e^{\epsilon} \mathbb{E}[\mathcal{M}(D)])^{(\alpha-1)/\alpha}\enspace,
\end{equation}
where the expectation is taken over the randomness in $\mathcal{M}$.
\end{lemma}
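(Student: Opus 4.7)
The plan is to represent each expectation as an integral of its tail probability, apply the $(\mathcal{K},r)$-outcomes guarantee pointwise at each tail level, and then evaluate the resulting integral separately for the ADP and RDP function families. The lower bounds will then follow by invoking the symmetry of DP between adjacent datasets and inverting.

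First I would note that since $\mathcal{M}(D)$ and $\mathcal{M}(\tilde{D})$ both take values in $[0,b]$, Fubini's Theorem yields the layer-cake identity $\mathbb{E}[\mathcal{M}(D)] = \int_0^b \operatorname{Pr}[\mathcal{M}(D) > x]\,dx$, and analogously for $\tilde{D}$. For each $x \in [0,b]$, applying \cref{equ:outcomes guarantee} to the measurable set $S_x = \{y : y > x\}$ bounds $\operatorname{Pr}[\mathcal{M}(\tilde{D}) > x]$ in terms of $\operatorname{Pr}[\mathcal{M}(D) > x]$ through the appropriate $K \in \mathcal{K}$. Integrating this pointwise bound over $x \in [0,b]$ then lifts the guarantee from probabilities to expectations, and the problem reduces to performing the integration with the specific $K$ of each family.

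For ADP the pointwise bound is $\operatorname{Pr}[\mathcal{M}(\tilde{D}) > x] \leq e^{\epsilon}\operatorname{Pr}[\mathcal{M}(D) > x] + \delta$, and linearity of integration directly yields $\mathbb{E}[\mathcal{M}(\tilde{D})] \leq e^{\epsilon}\mathbb{E}[\mathcal{M}(D)] + b\delta$, mirroring the argument used by \citet{lecuyer_certified_2019}. For RDP the pointwise bound reads $\operatorname{Pr}[\mathcal{M}(\tilde{D}) > x] \leq (e^{\epsilon}\operatorname{Pr}[\mathcal{M}(D) > x])^{(\alpha-1)/\alpha}$, and the key step is integrating this concave right-hand side. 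I would handle this via Hölder's inequality with conjugate exponents $\alpha$ and $\alpha/(\alpha-1)$ applied to the integrand $1 \cdot \operatorname{Pr}[\mathcal{M}(D) > x]^{(\alpha-1)/\alpha}$, which extracts a factor $b^{1/\alpha}$ from the constant-one term and $\mathbb{E}[\mathcal{M}(D)]^{(\alpha-1)/\alpha}$ from the tail integral, folding the residual $e^{\epsilon(\alpha-1)/\alpha}$ into the stated form $b^{1/\alpha}(e^{\epsilon}\mathbb{E}[\mathcal{M}(D)])^{(\alpha-1)/\alpha}$.

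For the lower bounds I would invoke the symmetry of DP: the outcomes guarantee holds in both directions between adjacent datasets, so swapping the roles of $D$ and $\tilde{D}$ in the upper bound and then inverting via the strict monotonicity of $K$ yields the corresponding lower bound on $\mathbb{E}[\mathcal{M}(\tilde{D})]$ (for ADP, subtract $b\delta$ and multiply by $e^{-\epsilon}$; for RDP, isolate $\mathbb{E}[\mathcal{M}(\tilde{D})]$ by raising to the power $\alpha/(\alpha-1)$). The main obstacle I anticipate is the RDP case: the order in which Hölder's inequality is combined with the outcomes guarantee must be chosen so that integration produces exactly the exponents and $b^{\pm 1/\alpha}$ factors stated, and the inversion step for the RDP lower bound requires careful bookkeeping of how the $e^{\epsilon}$ and $b$ constants rearrange under the exponent flip. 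The ADP algebra is routine by comparison.
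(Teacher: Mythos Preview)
Your proposal is correct and mirrors the paper's own proof essentially step for step: the layer-cake/Fubini representation of the expectation, the pointwise application of the outcomes guarantee to tail events, direct integration for ADP (citing \citet{lecuyer_certified_2019}), H\"older with exponents $\alpha$ and $\alpha/(\alpha-1)$ on $1\cdot\Pr[\mathcal{M}(D)>x]^{(\alpha-1)/\alpha}$ for RDP, and the symmetry-plus-monotonicity inversion for the lower bounds. The only cosmetic difference is that the paper does not spell out the algebraic inversion for the RDP lower bound, whereas you flag it explicitly.
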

\begin{proof}
The expected value can be obtained by integrating over the right-tail distribution function of the probabilities in \cref{equ:outcomes guarantee} by Fubini's Theorem~\citep{fubini1907} as
\begin{equation}
    \mathbb{E}[\mathcal{M}(\tilde{D})] = \int_{0}^{b} \operatorname{Pr}[\mathcal{M}(\tilde{D}) > t] \, dt \enspace.
\end{equation}
In the case of ADP, for any $K\in \mathcal{K}_{\epsilon,\delta}$ that parameterized by $\epsilon, \delta$, we have
\begin{align}
    \mathbb{E}[\mathcal{M}(\tilde{D})] \leq \int_{0}^{b} e^{\epsilon} \operatorname{Pr}[\mathcal{M}(D) > t] + \delta \, dt = e^{\epsilon}\mathbb{E}[\mathcal{M}(D)]+b\delta \enspace.
\end{align}
In the case of RDP, for any $K\in \mathcal{K}_{\epsilon,\alpha}$ that parameterized by $\epsilon, \alpha$, we have
\begin{align}
    \mathbb{E}[\mathcal{M}(\tilde{D})] \leq \int_{0}^{b} (e^{\epsilon} \operatorname{Pr}[\mathcal{M}(D) > t])^{(\alpha-1)/\alpha} \, dt \enspace.
\end{align}
Recall Hölder's Inequality, which states that for real-valued functions $f$ and $g$, and real $p, q>1$, such that $1 / p+1 / q=1$,
\begin{equation}
\|f g\|_{1} \leq\|f\|_{p}\|g\|_{q}\enspace.
\end{equation}
By Hölder's Inequality setting $p = \alpha$ and $q = \alpha/(\alpha - 1)$, $f(t) = 1$, $g(t) = \operatorname{Pr}\left[\mathcal{M}\left(D\right) > t \right]^{(\alpha-1) / \alpha}$, allows for us to state that
\begin{equation}
\begin{aligned}
\mathbb{E}(M(\tilde{D})) & \leq e^{\epsilon (\alpha-1)/\alpha} (\int_{0}^{b} 1^{\alpha} dt)^{1/\alpha} (\int_{0}^{b} \operatorname{Pr}[\mathcal{M}(D) > t] dt)^{(\alpha-1)/\alpha} \\
    & = e^{\epsilon (\alpha-1)/\alpha} b^{1/\alpha} (\mathbb{E}(\mathcal{M}(D)))^{(\alpha-1)/\alpha} \\
    & = b^{1/\alpha} (e^{\epsilon} \mathbb{E}(\mathcal{M}(D)))^{(\alpha-1)/\alpha}\enspace.
\end{aligned}
\end{equation}
The alternative inequality follows by both $K$ being strictly monotonic and symmetry in the roles of $D_1$, $D_2$ for DP.%
\end{proof}

\subsection{Policy-level Robustness Certification for Real-valued Reward}
\label{app:Policy-level Robustness Certification for Real-valued Reward}
To certify policy-level robustness in real-valued cumulative reward, we first extend the \cref{lem:expected outcomes guarantee} to the expected value in real number. 
\begin{lemma}[Real-valued Expected Outcomes Guarantee for ADP and RDP]
\label{lem: real-valued expected outcomes guarantee}
Suppose a randomized function $\mathcal{M}$, with bounded output $[a, b], a \in \mathbb{R}^{-}, b \in \mathbb{R}^{+}$, satisfies $(\mathcal{K}, r)$-outcomes guarantee. Then for any $\tilde{D} \in \mathcal{B}(D, r)$, if $\mathcal{K}$ denotes the function family of ADP $\mathcal{K}_{\epsilon, \delta}$, the expected value of its outputs satisfies:
\begin{align}
    &\mathbb{E}[\mathcal{M}(\tilde{D})] \geq e^{-\epsilon}(\mathbb{E}[\mathcal{M}(D)^{+}]-b\delta) - (e^{\epsilon} \mathbb{E}[\mathcal{M}(D)^{-}] - a\delta)\\ 
    &\mathbb{E}[\mathcal{M}(\tilde{D})] \leq e^{\epsilon} \mathbb{E}[\mathcal{M}(D)^{+}] + b\delta - e^{-\epsilon}(\mathbb{E}[\mathcal{M}(D)^{-}] + a\delta)
\end{align}
if $\mathcal{K}$ denotes the function family of RDP $\mathcal{K}_{\epsilon, \alpha}$, the expected value of its outputs satisfies:
\begin{align}
    &\mathbb{E}[\mathcal{M}(\tilde{D})] \geq e^{-\epsilon}(b^{-1/\alpha}\mathbb{E}[\mathcal{M}(D)^{+}])^{\frac{\alpha}{\alpha-1}} - (-a)^{1/\alpha} (e^{\epsilon} \mathbb{E}[\mathcal{M}(D)^{-}])^{(\alpha-1)/\alpha}\\
    &\mathbb{E}[\mathcal{M}(\tilde{D})] \leq b^{1/\alpha} (e^{\epsilon} \mathbb{E}[\mathcal{M}(D)^{+}])^{(\alpha-1)/\alpha} - e^{-\epsilon}((-a)^{-1/\alpha}\mathbb{E}[\mathcal{M}(D)^{-}])^{\frac{\alpha}{\alpha-1}}
\end{align}
where the expectation is taken over the randomness in $\mathcal{M}$, $\mathbb{E}[\mathcal{M}(D)^{+}]$ represents the expected value of all non-negative $\mathcal{M}(D)$, $\mathbb{E}[\mathcal{M}(D)^{-}]$ represents the expected value of all negative $\mathcal{M}(D)$. 
\end{lemma}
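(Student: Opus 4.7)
The plan is to reduce the real-valued case to the non-negative case already handled in Lemma~\ref{lem:expected outcomes guarantee} by decomposing $\mathcal{M}$ into its positive and negative parts. Specifically, I would introduce the post-processed mechanisms $\mathcal{M}^+(D) := \max\{\mathcal{M}(D),0\} \in [0,b]$ and $\mathcal{M}^-(D) := -\min\{\mathcal{M}(D),0\} \in [0,-a]$, so that $\mathcal{M}(D) = \mathcal{M}^+(D) - \mathcal{M}^-(D)$ pointwise. Because both mappings are deterministic post-processings of $\mathcal{M}$, the post-processing property of DP transfers the $(\mathcal{K},r)$-outcomes guarantee to each of them unchanged, preserving the same $\epsilon,\delta$ (ADP) or $\epsilon,\alpha$ (RDP) parameters.

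Next I would invoke Lemma~\ref{lem:expected outcomes guarantee} separately on $\mathcal{M}^+$ (with range constant $b$) and on $\mathcal{M}^-$ (with range constant $-a>0$, since $a\in\mathbb{R}^-$), obtaining two-sided bounds on $\mathbb{E}[\mathcal{M}^+(\tilde{D})]$ and $\mathbb{E}[\mathcal{M}^-(\tilde{D})]$ in both the ADP and RDP forms. These are immediate instantiations and require no new ideas beyond substituting $b \mapsto -a$ in the negative-part inequalities.

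To conclude, I would combine the two pairs of bounds by linearity of expectation: writing $\mathbb{E}[\mathcal{M}(\tilde{D})] = \mathbb{E}[\mathcal{M}^+(\tilde{D})] - \mathbb{E}[\mathcal{M}^-(\tilde{D})]$, an upper bound on the left arises from pairing the upper bound on the positive part with the lower bound on the negative part, and symmetrically for the lower bound. Identifying $\mathbb{E}[\mathcal{M}(D)^+]$ with $\mathbb{E}[\mathcal{M}^+(D)]$ and $\mathbb{E}[\mathcal{M}(D)^-]$ with $\mathbb{E}[\mathcal{M}^-(D)]$ (so that $\mathbb{E}[\mathcal{M}(D)^-]$ is the magnitude of the negative contribution) then reproduces the four stated inequalities after algebraic simplification.

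The main obstacle will be bookkeeping of signs: the sign flip in reassembling $\mathcal{M}^+ - \mathcal{M}^-$ interchanges the roles of upper and lower bounds for the negative part, and the range constant $-a$ enters nonlinearly in the RDP expression through $(-a)^{1/\alpha}$ and exponent $(\alpha-1)/\alpha$, so care is needed to ensure that the correct monotone branch of each $K\in\mathcal{K}$ is applied on each side. A convenient sanity check is that setting $a=0$ (respectively $b=0$) should collapse the real-valued statement back to Lemma~\ref{lem:expected outcomes guarantee} in both the ADP and RDP regimes, which I would verify before finalizing.
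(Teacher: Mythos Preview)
Your proposal is correct and follows essentially the same approach as the paper: both decompose $\mathcal{M}(\tilde{D})$ into its non-negative part $\mathcal{M}^{+}$ (range $[0,b]$) and the magnitude of its negative part $\mathcal{M}^{-}$ (range $[0,-a]$), bound each piece separately, and recombine via $\mathbb{E}[\mathcal{M}(\tilde{D})]=\mathbb{E}[\mathcal{M}^{+}(\tilde{D})]-\mathbb{E}[\mathcal{M}^{-}(\tilde{D})]$. The only cosmetic difference is that you invoke Lemma~\ref{lem:expected outcomes guarantee} as a black box on the post-processed mechanisms, whereas the paper re-expresses $\mathbb{E}[\mathcal{M}(\tilde{D})]$ via the tail-integral identity $\int_{0}^{b}\Pr[\mathcal{M}(\tilde{D})\geq t]\,dt-\int_{a}^{0}\Pr[\mathcal{M}(\tilde{D})\leq u]\,du$ and applies the outcomes guarantee directly inside the integrals; the two routes are equivalent.
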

\begin{proof}
We extend the Fubini's Theorem from non-negative values to real values as:
\begin{equation}
\label{eq:tonelli}
    \mathbb{E}[X]=\int_0^{\infty} \mathrm{Pr}[X \geq t]\, dt-\int_{-\infty}^0 \mathrm{Pr}[X \leq t]\, dt
\end{equation}
which can be derived as
\begin{equation}
\begin{aligned}
\text{Let } &X^{+}:= \begin{cases}X & \text { if } X \geq 0 \\ 0 & \text { otherwise }\end{cases}\\
&X^{-}:=\left\{\begin{array}{ll}
-X & \text { if } X<0 \\
0 & \text { otherwise }
\end{array} \right.\\
\end{aligned}
\end{equation}
Then, we have
\begin{equation}
\begin{aligned}
&X = X^+ - X^- \rightarrow \mathbb{E}[X] = \mathbb{E}[X^+] - \mathbb{E}[X^-] \\
&\mathbb{E}[X^+] = \int_0^{\infty} \mathrm{Pr}[X^+ \geq t] d t = \int_0^{\infty} \mathrm{Pr}[X \geq t] d t\\
&\mathbb{E}[X^-] = \int_0^{\infty} \mathrm{Pr}[X^- \geq t] dt = \int_0^{\infty} \mathrm{Pr}[X \leq -t] dt = \int_{-\infty}^{0} \mathrm{Pr}[X \leq t] dt
\end{aligned} 
\end{equation}
The expected value $\mathbb{E}[\mathcal{M}(\tilde{D})]$ in range $[a,b]$ with $\mathcal{M}$ satisfies $(\mathcal{K},r)$-outcomes guarantee, can be written as
\begin{equation}
    \begin{aligned}
        \mathbb{E}[\mathcal{M}(\tilde{D})] &= \int_0^{b} \mathrm{Pr}[\mathcal{M}(\tilde{D}) \geq t] dt-\int_{a}^0 \mathrm{Pr}[\mathcal{M}(\tilde{D}] \leq u) du\\
        &= \int_0^{b} \mathrm{Pr}[\mathcal{M}(\tilde{D}] \in T) dt-\int_{a}^0 \mathrm{Pr}[\mathcal{M}(\tilde{D}] \in U) du\\
        &\geq \int_0^{b} K^{-1} (\mathrm{Pr}[\mathcal{M}(D) \in T]) dt-\int_{a}^0 K(\mathrm{Pr}[\mathcal{M}(D) \in U]) du
    \end{aligned}
\end{equation}
where $K \in \mathcal{K}$. For the cases of ADP and RDP, replace the $K$ with corresponding function as outlined in \cref{def:outcomes guarantee}.
\end{proof}

Then the \cref{the:policy-level robustness} can be extended to the case of real-valued expected cumulative reward as,
\begin{theorem}[Policy-level robustness by outcomes guarantee in real-value range]
Consider an RL environment with bounded cumulative reward in the range $[a, b], a \in \mathbb{R}^{-}, b \in \mathbb{R}^{+}$, an offline RL training dataset $D$, and a learning algorithm $\mathcal{M}$ that takes the training dataset $D$ and outputs the randomized policy $\pi=\mathcal{M}(D)$. If $\mathcal{M}$ preserves a $(\mathcal{K}, r)$-outcomes guarantee in ADP, then for each $K\in\mathcal{K}_{\epsilon, \delta}$ with corresponding $\epsilon, \delta$ satisfies the policy-level robustness of size $r$ for any poisoned dataset $\tilde{D} \in \mathcal{B}(D, r)$ as
\begin{equation}
\label{eq:ADP expected outcomes guarantee real valued}
    J(\tilde{\pi}) \geq e^{-\epsilon}(J(\pi)^{+}-b\delta) - (e^{\epsilon} J(\pi)^{-} - a\delta) \enspace.
\end{equation}
If $\mathcal{M}$ preserves a $(\mathcal{K}, r)$-outcomes guarantee in RDP, then for each $K\in\mathcal{K}_{\epsilon, \alpha}$ with corresponding $\epsilon, \alpha$ satisfies the policy-level robustness of size $r$ as
\begin{equation}
\label{eq:RDP expected outcomes guarantee real valued}
    J(\tilde{\pi}) \geq e^{-\epsilon}(b^{-1/\alpha}J(\pi)^{+})^{\frac{\alpha}{\alpha-1}} - (-a)^{1/\alpha} (e^{\epsilon} J(\pi)^{-})^{(\alpha-1)/\alpha} \enspace,
\end{equation}
where $J(\pi)^{+}$ denotes the expected value of all non-negative cumulative rewards, $J(\pi)^{-}$ denotes the expected value of all negative cumulative rewards. 
\end{theorem}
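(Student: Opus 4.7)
\begin{prf}
The plan is to mirror the argument used for \cref{the:policy-level robustness}, simply swapping in the real-valued version of the expected outcomes guarantee (\cref{lem: real-valued expected outcomes guarantee}) in place of \cref{lem:expected outcomes guarantee}. The key observation is that nothing in the proof of the original theorem required the reward to be non-negative except at the step where the lemma was invoked, so once the lemma has been generalised to real values the rest of the reasoning goes through unchanged.

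Concretely, I would first note that the cumulative reward functional $C(\tilde{\pi}) = C(\mathcal{M}(\tilde{D}))$ is a post-computation applied to the output of the DP mechanism $\mathcal{M}$. By the post-processing property of DP, the composed map inherits the same $(\mathcal{K}, r)$-outcomes guarantee as $\mathcal{M}$, now with output range $[a,b]$ instead of $[0,b]$. This is the only structural ingredient needed before invoking the lemma.

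Next, I would apply \cref{lem: real-valued expected outcomes guarantee} to the random variable $C(\tilde{\pi})$ with respect to the reference distribution $C(\pi)$. Taking the expectation over the training randomness $\sigma$ and the environment randomness $\xi$, we identify $\mathbb{E}[C(\pi)^{+}] = J(\pi)^{+}$ and $\mathbb{E}[C(\pi)^{-}] = J(\pi)^{-}$, so that the lower bound in the lemma rewrites directly as \cref{eq:ADP expected outcomes guarantee real valued} in the ADP case and \cref{eq:RDP expected outcomes guarantee real valued} in the RDP case. Since the lemma already gives the lower bound, no further manipulation is required.

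There is essentially no genuine obstacle here: the statement is a direct corollary of \cref{lem: real-valued expected outcomes guarantee} combined with post-processing. The only care point is being explicit that the positive and negative parts $J(\pi)^{+}, J(\pi)^{-}$ are computed as expectations of $C(\pi)^{+}, C(\pi)^{-}$ over both sources of randomness, so that the substitution into the lemma is valid. Once that bookkeeping is made clear, the proof reduces to two short sentences plus the lemma invocation.
\end{prf}
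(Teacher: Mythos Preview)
Your proposal is correct and follows essentially the same approach as the paper: the paper's own proof simply states that one repeats the argument of \cref{the:policy-level robustness}, replacing the use of \cref{lem:expected outcomes guarantee} by \cref{lem: real-valued expected outcomes guarantee}. Your added remark about identifying $\mathbb{E}[C(\pi)^{+}]$ and $\mathbb{E}[C(\pi)^{-}]$ with $J(\pi)^{+}$ and $J(\pi)^{-}$ is a helpful clarification that the paper leaves implicit.
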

\begin{proof}
    The proof is similar to the proof of \cref{the:policy-level robustness}, instead replace the usage of \cref{lem:expected outcomes guarantee} to \cref{lem: real-valued expected outcomes guarantee} for the real-valued expected outcomes guarantee. 
\end{proof}

\subsection{Additional Details of Action-level Certification}
\label{app:Additional Details of Action-level Certification}
Here we provide additional details of the action-level certification process. As discussed in \cref{sec:Action-level Robustness Certification}, the inferred scores are estimated by the sampling from the policy instances $(\hat{\pi}_1,\cdots,\hat{\pi}_p)$. Due to uncertainty, we obtain the upper and lower bounds of the inferred score with a confidence interval of at least $1-\alpha$ via the \textsc{SimuEM} method~\citep{jia_intrinsic_2020} based on the Clopper-Pearson method. Specifically, the \textsc{SimuEM} directly estimates the upper and lower bounds of the inferred scores $I_{A_l}(s_t, \pi) = \operatorname{Pr}[\arg\max_{a_i} Q_{\pi}(s_t, a_i)=A_l]$ based on the frequencies $(n_i, \cdots, n_L)$ of each $A_l$ produced by the policy instances as
\begin{align*}
\underline{I}_{A_l} &= Beta\left(\frac{\alpha}{L}; n_l, p - n_l + 1\right), \\
\overline{I}_{A_i} &= Beta\left(1 - \frac{\alpha}{L}; n_i + 1, p - n_i\right), \quad \forall i \neq l \enspace.
\end{align*}

To determine the maximum tolerable poisoning size $r_t$ for a given state $s_t$, a binary search is performed over the domain of $\mathcal{K}$. As described in \cref{sec:Differential Privacy Framework} and \cref{app:DP training algorithm}, $\mathcal{K}$ represents the set of $(\delta, \epsilon)$ or $(\alpha, \epsilon)$ pairs that satisfy the privacy guarantees of the respective DP training algorithm. The binary search operates within a predefined range, such as $(0, 500)$, aiming to identify the largest radius $r_t$ that meets the condition specified in \cref{the:action level robustness}, provided there exist $K_1$ and $K_2$ within the domain of $\mathcal{K}$.

\subsection{Additional Results of Action-level Certification}
\label{app:Additional Results of Action-level Certification}
The \cref{fig:action-level certification adp} and \cref{tab:acton-level certification results} show additional experimental results of action-level robustness certification and training statics. 

\begin{figure}[!ht]
    \centering
    \includegraphics[width=\linewidth]{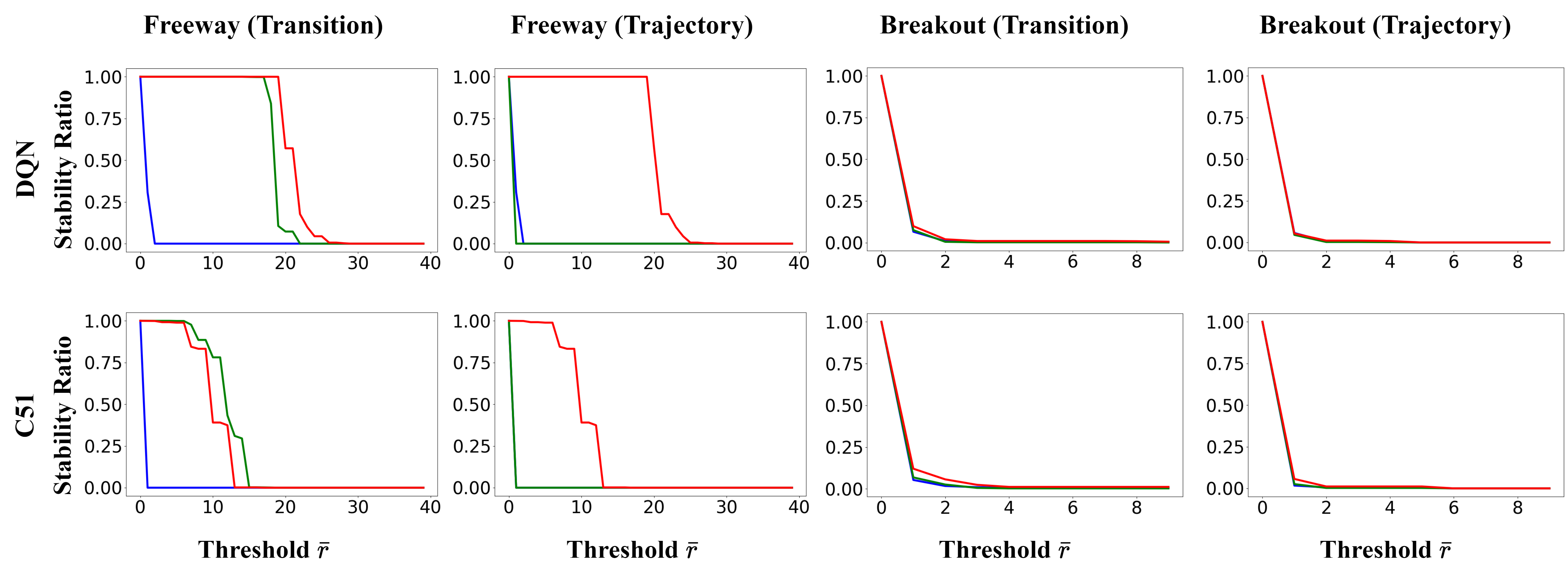}
    \caption{Stability ratio versus the tolerable poisoning threshold $\Bar{r}$ for action-level robustness with ADP. Results are presented for two Atari games, Freeway and Breakout with RL algorithms DQN and C51 under transition- and trajectory-level poisoning. The blue, green, and red lines represent our proposed certified defense.}
    \label{fig:action-level certification adp}
\end{figure}

\begin{table}[!ht]
\centering
\resizebox{0.7\textwidth}{!}{%
\begin{tabular}{@{}ccccccc@{}}
\toprule
\multirow{3}{*}{Environment} & \multirow{3}{*}{Method}         & \multirow{3}{*}{Noise} & \multicolumn{4}{c}{Action-level Max Radii}        \\ \cmidrule(l){4-7} 
                             &                                 &                        & \multicolumn{2}{c}{DQN} & \multicolumn{2}{c}{C51} \\
                             &                                 &                        & Transition & Trajectory & Transition & Trajectory \\ \midrule
\multirow{5}{*}{Freeway}     & \multirow{4}{*}{Proposed (RDP)} & 0.0                    & N/A        & N/A        & N/A        & N/A        \\
                             &                                 & 1.0                    & 159        & 37         & 144        & 29         \\
                             &                                 & 2.0                    & 186        & 66         & 186        & 59         \\
                             &                                 & 3.0                    & 200        & 136        & 200        & 83         \\
                             & COPA                            & N/A                    & N/A        & 13         & N/A        & 10         \\ \midrule
\multirow{5}{*}{Breakout}    & \multirow{4}{*}{Proposed (RDP)} & 0.0                    & N/A        & N/A        & N/A        & N/A        \\
                             &                                 & 1.0                    & 99         & 98         & 100        & 100        \\
                             &                                 & 1.5                    & 99         & 99         & 100        & 100        \\
                             &                                 & 2.0                    & 118        & 117        & 120        & 120        \\
                             & COPA                            & N/A                    & N/A        & 25         & N/A        & 24         \\ \bottomrule
\end{tabular}
}
\caption{Our proposed method with RDP. The maximum value of maximum tolerable poisoning size $r_t$ of the action-level robustness for the evaluated environments, certified methods, noise levels, and RL algorithms.}
\label{tab:acton-level certification results app}
\end{table}

\begin{table}[!ht]
\centering
\resizebox{0.7\textwidth}{!}{%
\begin{tabular}{@{}ccccccc@{}}
\toprule
\multirow{3}{*}{Environment} & \multirow{3}{*}{Method}         & \multirow{3}{*}{Noise} & \multicolumn{4}{c}{Action-level Mean Radii}       \\ \cmidrule(l){4-7} 
                             &                                 &                        & \multicolumn{2}{c}{DQN} & \multicolumn{2}{c}{C51} \\
                             &                                 &                        & Transition & Trajectory & Transition & Trajectory \\ \midrule
\multirow{4}{*}{Freeway}     & \multirow{4}{*}{Proposed (ADP)} & 0.0                    & N/A        & N/A        & N/A        & N/A        \\
                             &                                 & 1.0                    & 0.3        & 0.3        & 0.0        & 0.0        \\
                             &                                 & 2.0                    & 18.0       & 0.7        & 11.3       & 1.7        \\
                             &                                 & 3.0                    & 20.5       & 20.0       & 9.6        & 2.1        \\ \midrule
\multirow{4}{*}{Breakout}    & \multirow{4}{*}{Proposed (ADP)} & 0.0                    & N/A        & N/A        & N/A        & N/A        \\
                             &                                 & 1.0                    & 0.1        & 0.07       & 0.1        & 0.04       \\
                             &                                 & 1.5                    & 0.09       & 0.05       & 0.1        & 0.03       \\
                             &                                 & 2.0                    & 0.18       & 0.08       & 0.3        & 0.10       \\ \bottomrule
\end{tabular}
}
\caption{Our proposed method with ADP. The mean value of maximum tolerable poisoning size $r_t$ of the action-level robustness for the evaluated environments, certified methods, noise levels, and RL algorithms.}
\end{table}

\begin{table}[!ht]
\centering
\resizebox{0.7\textwidth}{!}{%
\begin{tabular}{@{}ccccccc@{}}
\toprule
\multirow{3}{*}{Environment} & \multirow{3}{*}{Method}         & \multirow{3}{*}{Noise} & \multicolumn{4}{c}{Action-level Max Radii}        \\ \cmidrule(l){4-7} 
                             &                                 &                        & \multicolumn{2}{c}{DQN} & \multicolumn{2}{c}{C51} \\
                             &                                 &                        & Transition & Trajectory & Transition & Trajectory \\ \midrule
\multirow{4}{*}{Freeway}     & \multirow{4}{*}{Proposed (ADP)} & 0.0                    & N/A        & N/A        & N/A        & N/A        \\
                             &                                 & 1.0                    & 1          & 1          & 0          & 0          \\
                             &                                 & 2.0                    & 23         & 3          & 18         & 7          \\
                             &                                 & 3.0                    & 28         & 28         & 16         & 16         \\ \midrule
\multirow{4}{*}{Breakout}    & \multirow{4}{*}{Proposed (ADP)} & 0.0                    & N/A        & N/A        & N/A        & N/A        \\
                             &                                 & 1.0                    & 10         & 4          & 11         & 5          \\
                             &                                 & 1.5                    & 10         & 4          & 11         & 5          \\
                             &                                 & 2.0                    & 10         & 4          & 11         & 5          \\ \bottomrule
\end{tabular}
}
\caption{Our proposed method with ADP. The max value of maximum tolerable poisoning size $r_t$ of the action-level robustness for the evaluated environments, certified methods, noise levels, and RL algorithms.}
\end{table}

\subsection{Additional Results against Empirical Attacks}
\label{app:empirical attacks}

To assess the performance of certifications relative to trajectory-level attacks, we implemented two attacks against HalfCheetah when defended using RDP at $\sigma = 2.0$, with the results presented below. These attacks include one in which the rewards in a subset of trajectories are replaced with $r'_i \sim \text{Uniform}[-1,1]$ (Random Reward); and one where they are replaced by $r'_i = -r_i$ (Adversarial Reward), following the methodology of~\cite{ye_corruption-robust_2023}. For each attack, our experiments were conducted over $150$ runs to provide the estimated expected cumulative reward (Est. ECR) with $95\%$ confidence intervals, and the minimum cumulative reward among all the runs. The corresponding policy-level robustness certification (certified lower bound on ECR) $\underline{J}_r$ for each poisoning size $10\%$ ($r=200$) and $20\%$ ($r=400$) are shown as the same as in \cref{fig:policy-level certification c} in the paper.

The results demonstrate that the empirical performance of our certified defense significantly exceeds the certified lower bound. This observation aligns with the theoretical framework, which defines the certified lower bound as a guarantee for the worst-case scenario, and it provides a conservative measurement of the robustness against attack.

\begin{table}[!ht]
\centering
\resizebox{1.0\textwidth}{!}{%
\begin{tabular}{@{}ccccc@{}}
\toprule
Attack (Trajectory-level) & Poisoning Proportion & Est. ECR     & Min Cumulative Reward & Certified Lower Bound on ECR ($\underline{J}_r$) \\ \midrule
Random Reward             & 10\%                 & 79.73 ± 0.44 & 76.52                 & 48.76                                            \\
Random Reward             & 20\%                 & 75.94 ± 0.74 & 71.96                 & 23.17                                            \\
Adversarial Reward        & 10\%                 & 68.49 ± 0.93 & 61.33                 & 48.76                                            \\
Adversarial Reward        & 20\%                 & 60.97 ± 0.58 & 56.39                 & 23.17                                            \\ \bottomrule
\end{tabular}
}
\caption{Trajectory-level defence with RDP and noise $\sigma=2.0$ against empirical attacks in the game Halfcheetha.}
\label{tab:empirical results}
\end{table}

\end{document}